\let\csname equation*\endcsname\relax
\let\csname endequation*\endcsname\relax
\newcommand{\beginsupplement}{%
	\setcounter{section}{0}
	\renewcommand{\thesubsection}{\Alph{subsection}}
        \setcounter{table}{0}
        \renewcommand{\thetable}{\Alph{subsection}\arabic{table}}%
        \setcounter{figure}{0}
        \renewcommand{\thefigure}{\Alph{subsection}\arabic{figure}}%
        }
\DeclareMathOperator*{\argmin}{arg\,min}
\newcommand{\tm}[1]{\textrm{#1}}
\newcommand{\rn}[1]{[\!\,{#1}\,\!]}
\definecolor{llgray}{RGB}{250,235,215} 
\definecolor{lgray}{RGB}{220,220,220}
\begin{document}


%
%

\title{Can a single neuron learn predictive uncertainty?}

\author{Edgardo Solano-Carrillo}

\address{German Aerospace Center (DLR)\\ Institute for the Protection of Maritime Infrastructures\\
Bremerhaven, Germany\\
\texttt{Edgardo.SolanoCarrillo@dlr.de}}

\maketitle


\begin{abstract}
Uncertainty estimation methods using deep learning approaches strive against separating how uncertain the state of the world manifests to us via measurement (objective end) from the way this gets scrambled with the model specification and training procedure used to predict such state (subjective means) --- e.g., number of neurons, depth, connections, priors (if the model is bayesian), weight initialization, etc. This poses the question of the extent to which one can eliminate the degrees of freedom associated with these specifications and still being able to capture the objective end. Here,    
a novel non-parametric quantile estimation method for continuous random variables is introduced, based on the simplest neural network architecture with one degree of freedom: a single neuron. Its advantage is first shown in synthetic experiments comparing with the quantile estimation achieved from ranking the order statistics (specifically for small sample size) and with quantile regression. In real-world applications, the method can be used to  quantify predictive uncertainty under the split conformal prediction setting, whereby prediction intervals are estimated from the residuals of a pre-trained model on a held-out validation set and then used to quantify the uncertainty in future predictions --- the single neuron used here as a structureless ``thermometer'' that measures how uncertain the pre-trained model is. Benchmarking regression and classification experiments demonstrate that the method is competitive in quality and coverage with state-of-the-art solutions, with the added benefit of being more computationally efficient.    
\end{abstract}

\keywords{Uncertainty in AI, Explainable AI, Non-parametric quantile estimation, Order statistics, Split conformal predictions.}

\section{Introduction}

Estimating how uncertain artificial intelligence systems are of their predictions is crucial for their safe applications \cite{safe2016,ai_safety,Shi1,Shi2}. Quantifying uncertainty is then as important as designing good predictive models. It is an open problem, in part due to its ambiguous character: if predictions are interpreted as subjective opinions, evidence-based theory \cite{Sensoy, josang, Shi} typically measures uncertainty in entropic terms. On the other hand, if uncertainty is understood as a synonym for the variability of the predictive distribution, prediction intervals \cite{reviewPI} best summarize this in quantile terms.\newpage 

Prediction intervals express uncertainty in terms of \emph{confidence} probabilities, of which humans have a natural cognitive intuition \cite{cosmides, Juanchich2020} of guidance for decision making. As such, their use to quantify uncertainty is standardized across a wide variety of safety-critical regression applications, including medicine \cite{medPI}, economics \cite{econPI}, finance \cite{finPI}; as well as in the forecasting of electrical load \cite{energyPI}, solar energy \cite{solarPI}, gas flow \cite{gasPI}, wind power \cite{windPI}, and many other forecasting problems \cite{m4PI}.

In classification applications, there is less consensus on the use of a confidence-based (and hence intuitive) measure of uncertainty \cite{survey_classif2021}. For image classification, for instance, dozens of different uncertainty measures exist \cite{SQR}. Nevertheless, despite the diversity of methods to quantify uncertainty accross prediction categories, making uncertainty inferences has converged to a mainstream strategy: the same machine learning model that predicts a given target \emph{simultaneously} learns the associated uncertainties. These models are often underspecified \cite{underspec}, giving unreliable predictions under stress tests, and also under distribution shift \cite{Ovadia, Hendrycks}. This unreliability is therefore translated (by design) to how these models assess uncertainty. 

A different strategy is then considered in this work: \emph{model} how to predict, as usual, but \emph{measure} the associated predictive uncertainty during validation, using a confidence-based learning method. This model-agnostic approach to uncertainty estimation is aligned with rising trends in the post-hoc explainability of deep learning models \cite{rev_expl_2020}. That is, a predictive deep learning model is considered as a black box and a second system estimates how uncertain the black box is. Our main motivation in this work is finding an uncertainty estimator which is not part of the black box and therefore not having itself any associated uncertainty due to model specification. This leads us to the extreme case of a non-parametric quantile estimator consisting of a single neuron. A number of synthetic and real-world experiments demonstrate that the proposed quantile estimator has similar accuracy but better efficiency than some state-of-the-art methods.

The main contribution of this work is then:

\begin{itemize}
 \item A method for quantile estimation, which measures (using gradient descent) the predictive uncertainty of a pre-trained model in a held-out validation set, eliminating the bias associated with model specification.
 \item An attempt to cover, under the same umbrella, the predictive uncertainty in both regression and classification problems using a confidence-based method.
\end{itemize}

\section{Quantile estimation}
Let $E$ be a real random variable with distribution $F$, so that $\Pr(E\le \varepsilon)=F(\varepsilon)$. For any $p\in(0,1)$, a $p$-th quantile of $F$ is a number $r_p$ satisfying $F(r_p-)\le p \le F(r_p)$, where the left limit is $F(r_p-):=\lim_{z \uparrow r_p}F(z)$. For all \emph{continuous} distribution functions $F$, of interest here, this becomes
\begin{equation}\label{eq:Fp}
 F(r_p)=p.
\end{equation}
If $F$ is a strictly increasing function, then there is only one number satisfying \ref{eq:Fp}. It defines the quantile function $r_p=F^{-1}(p)$. It is our purpose in this work to introduce an efficient non-parametric method to estimate it, and apply it to typical regression problems with no discrete component of $F$.

\subsection{Proposed estimator.}\label{sec:PIM}
To be able to use a neural network, and at the same time obtain a non-parametric quantile estimator, a single neuron is considered whose weight $w_p$ coincides with the quantile to be learned. Using an independently drawn sample $\bm{\varepsilon} =(\varepsilon_1, \varepsilon_2,\cdots,\varepsilon_m)$ of $E$ of size $m$, this neuron activates itself to output the empirical distribution function $F_m(w_p)=\frac{1}{m}\sum_{i=1}^{m}\mathbbm{1}(\varepsilon_i \le w_p)$, making an error $\mathcal{L}(w_p)=[F_m(w_p)-p]^2$ in reaching its target $p$. After properly initializing $w_p$, this neuron is trained with gradient descent after smoothing the indicator function $\mathbbm{1}(\varepsilon_i \le w_p)\sim\sigma(\beta(w_p-|\varepsilon_i|))$ using a sigmoid $\sigma(x)=(1+\exp(-x))^{-1}$; this approximation becomes exact as $\beta\rightarrow\infty$.\footnote{Since $\nabla\mathcal{L}(w_p)\sim \beta$, in practice, the value of $\beta$ can be jointly selected with the learning rate $lr$. For most of the experiments in this work, $\beta=10^3$ with $lr=0.005$ work well.}

From Borel's law of large numbers, $F_m(w_p)$ almost surely tends to $F(w_p)$ for infinite sample size. In this limit, our neuron is trained by minimizing $\mathcal{L}(w_p)=[F(w_p)-p]^2$, so its weight $w_p$ converges to the global minimum $r_p$ by \ref{eq:Fp}. Therefore, the proposed quantile estimator is asymptotically consistent. For reasons that become clearer later, it is called a \emph{Prediction Interval Metric} (PIM). Further theoretical details and link to the source code reproducing the experiments may be found in the appendices.

\subsection{Comparison to estimation from the order statistics.}
Quantile estimation from ranking the order statistics is a pretty standard technique with at most $O(m)$ complexity. It considers the sample $\bm{\varepsilon} =(\varepsilon_1, \varepsilon_2,\cdots,\varepsilon_m)$ and starts by constructing the order statistics $\varepsilon_{(k)}$ as the $k$-th smallest value in $\bm{\varepsilon}$, with $k=1,\cdots,m$. If $mp$ is not an integer, then there is only one value of $k$ for which $(k-1)/m < p < k/m$; this is called the rank. Since $F_m(\varepsilon)=k/m$ for $\varepsilon_{(k)} \le \varepsilon < \varepsilon_{(k+1)}$, then a unique $p$-th quantile is estimated as $\varepsilon_{(k)}$. However, if $mp$ is an integer, an interval of $p$-th quantiles of $F_m$ exists with endpoints $\varepsilon_{(k)}$ and $\varepsilon_{(k+1)}$, the rank becoming a real-valued index. How to select a representative value from such an interval?

One posibility would be to take the midpoint $(\varepsilon_{(k)}+\varepsilon_{(k+1)})/2$. This is equivalent to Laplace's ``Principle of Insufficient Reason'' as an attempt to supply a criterion of choice \cite{jaynes_1957}, that is, since there is no reason to think otherwise, the events: the best representative value is $\varepsilon_{(k)}$ or the best representative value is $\varepsilon_{(k+1)}$, are equally likely. Hyndman \& Fan\cite{HandF} compiled a taxonomy of nine interpolation schemes used by a number of statistical packages. They all add to the arbitrariness of selection of a representative value. Since $\Delta F_m(\varepsilon) \sim 1/m$ as $\Delta \varepsilon\sim \varepsilon_{(k+1)}-\varepsilon_{(k)}$, this arbitrariness has a major impact for small sample size, as shown in Fig. \ref{rmse}, where the confidence interval function $I(p)= r_{(1+p)/2}-r_{(1-p)/2}$ of a standard normal random variable is  estimated using all interpolation methods provided by the {\small \textsc{numpy}} library. As observed, PIM does not have such a selection bias and can be more accurate for small sample size (as can also be demonstrated in experiments extending PIM to the classification domain, see appendix \ref{sec:PIM_classif}). 

\begin{figure}[t]
 \centering
 \includegraphics[scale=0.45]{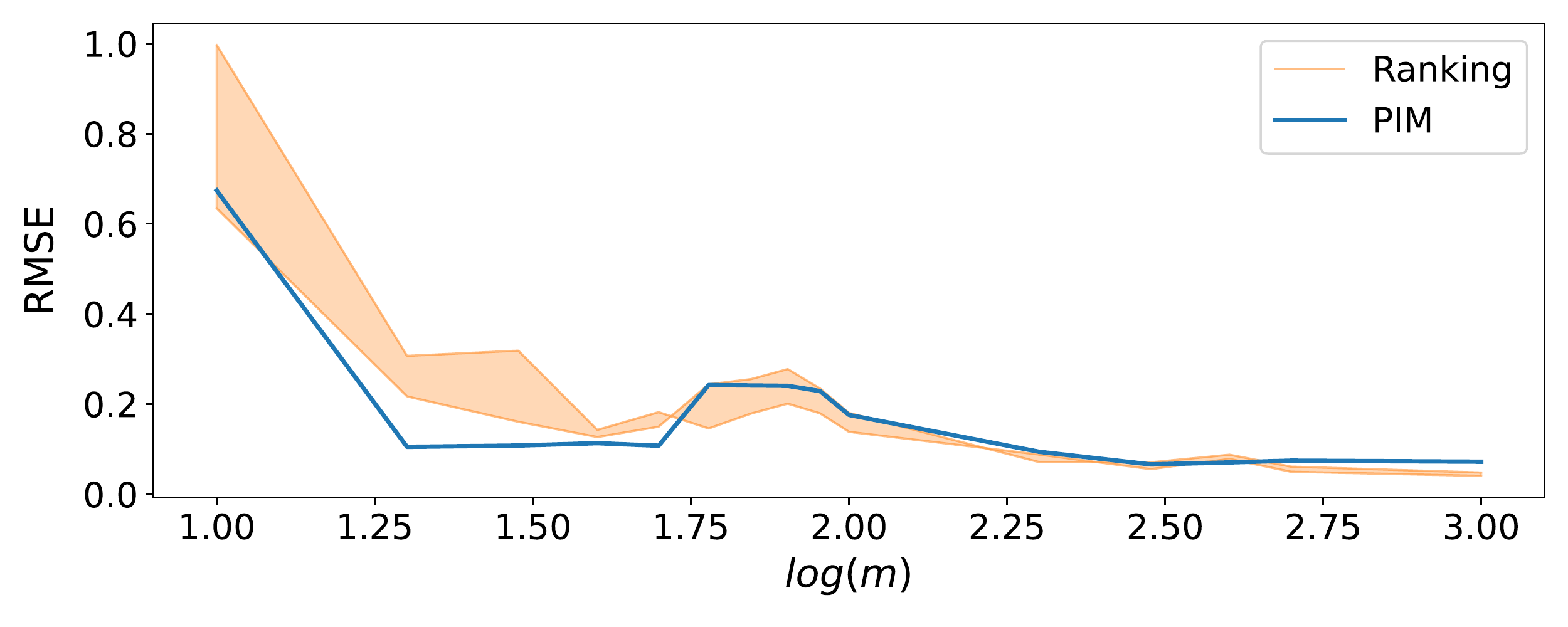}
 \caption{Root Mean Square Error (RMSE) between the estimated and exact confidence interval function $I(p) = r_{(1+p)/2}-r_{(1-p)/2}$ around the median of a standard normal random variable as a function of the log of sample size. The values of $p$ are in the range $[0.05,0.90]$ in steps of $0.05$.}
 \label{rmse}
\end{figure}

\subsection{Conditional quantiles}\label{sec:condq}
In regression analysis, one is interested in explaining the variations of a target random variable $Y$ taking values $y\in\mathbb{R}$ in terms of feature random variables $X$ taking values $x\in\mathbb{R}^d$. It is usually assumed, either implicitly or explicitly, that a deterministic map $f$ exists, explaining such variations as $y = f(x)+\varepsilon_{\tm{obs}}(x)$, up to some additive noise $\varepsilon_{\tm{obs}}(x)$ inherent to the data observation process. Empirically, this map is estimated by choosing a statistical model $\hat{f}(x)$ (e.g. a neural network), which approximates the target as
\vspace{-0.15cm}
\begin{equation}\label{y_mod}
 y = \hat{f}(x)+ \varepsilon(x).
\end{equation}
In so doing, the predictive model makes the error $\varepsilon(x) =\varepsilon_{\tm{obs}}(x)+\varepsilon_{\tm{epis}}(x)$ consisting of the aleatoric part $\varepsilon_{\tm{obs}}(x)$ and an epistemic  part $\varepsilon_{\tm{epis}}(x)=f(x)-\hat{f}(x)$ which entails an uncertainty due to the lack of knowledge of $f(x)$. This could be because we are not sure how to select $\hat{f}$ (model specification) or because the shape of $f$ for unexplored regions of feature space might be significantly different from that inferred from the training set (distributional changes).

The random variable $E$ defined previously is now conditioned on $X$, which is denoted as $E|X$. It will be understood to take the error values $\varepsilon(x)$ in \ref{y_mod}, and is relocated to satisfy $\tm{median}(E|X)=0$. We say that the errors are \emph{homoskedastic} if $E$ is independent of $X$, otherwise they are \emph{heteroskedastic}. There are then two ways to calculate the aleatoric uncertainty of the target variable $Y$: 
\begin{enumerate}
 \item Estimating the conditional quantile function $\mu_p(x)$ which assigns pointwise the smallest $\mu$ for which $\Pr(Y\le \mu|x)=p$ and, from this, computing the prediction intervals $[\hat{\mu}_{(1-p)/2}(x),\,\hat{\mu}_{(1+p)/2}(x)]$ quantifying the uncertainty of the target at confidence level $p$.
 \item  Estimating the conditional quantile function $r_p(x)$ of the error variable $E|X$ and computing the corresponding prediction intervals $[\hat{f}(x)-\hat{r}_{p}(x),\,\hat{f}(x)+\hat{r}_{p}(x)]$ quantifying the uncertainty of the target at confidence level $p$. This assumes that $\hat{f}$ is a good approximator of the median of $Y|X$.
\end{enumerate}

Approaches of type 1 are known as quantile regression. They enlarge the model $\hat{f}\rightarrow (\hat{f}_L, \hat{f}_U)$ to fit the endpoints of the prediction intervals, i.e. $\hat{f}_{L;p}(x)=\hat{\mu}_{(1-p)/2}(x)$ and $\hat{f}_{U;p}(x)=\hat{\mu}_{(1+p)/2}(x)$. Given a training set $\{(x_i,y_i):i\in \mathcal{I}\}$, we consider two state-of-the-art methods of this kind 

\begin{itemize}
 \item  Simultaneous Quantile Regression (SQR): this minimizes the average pinball loss $\frac{1}{|\mathcal{I}|}\sum_{i\in \mathcal{I}} l_p(\varepsilon_i)$  for $\varepsilon_i=y_i-\hat{f}_{L;p}(x_i)$ and for $\varepsilon_i=y_i-\hat{f}_{U;p}(x_i)$ simultaneously in the same model, where $l_p(\varepsilon_i)=p\,\varepsilon_i\,\mathbbm{1}(\varepsilon_i\ge0)+(p-1)\,\varepsilon_i\,\mathbbm{1}(\varepsilon_i <0)$. This is enough for our purpose, since it has less execution steps than the state-of-the-art SQR\cite{SQR}. Yet, it works better than standard quantile regression which estimates each quantile separately.
 \item Quality Driven (QD) method \cite{pearce}: In the training subset indexed by $\mathcal{C}=\{i: \hat{f}_{L;p}(x_i)\le y_i \le \hat{f}_{U;p}(x_i)\}$, this minimizes the captured mean prediction interval width (MPIW), which is expressed as $\tm{MPIW}_{\tm{capt}}=\frac{1}{|\mathcal{C}|}\sum_{i\in \mathcal{C}}[\hat{f}_{U;p}(x_i)-\hat{f}_{L;p}(x_i)]$, subject to the prediction interval coverage proportion (PICP) satisfying $\tm{PICP}:= |\mathcal{C}|/|\mathcal{I}| \ge p$. The rationale is that prediction intervals of good quality \cite{LUBE} have $\tm{MPIW}_{\tm{capt}}$ as small as posible and enough coverage. 
\end{itemize}

For approaches of type 2, the training set has to be split into two disjoint subsets: a proper training set $\{(x_i,y_i):i\in \mathcal{I}_1\}$ and a calibration (or validation) set $\{(x_i,y_i):i\in \mathcal{I}_2\}$. The proper training set is used to fit $\hat{f}(x)$, which is then evaluated on the validation set to compute the errors $\varepsilon_i:=\varepsilon(x_i)=y_i-\hat{f}(x_i)$ and their quantiles. This is the setting used in the split conformal  prediction literature \cite{Lei_2018}, where sample quantiles are estimated by ranking the order statistics. However, in this literature, a single $\hat{r}_p$ is obtained from $\{\varepsilon_i:i\in \mathcal{I}_2\}$. PIM may also be applied within this setting, obtaining $\hat{r}_p(x)$ which could vary with $x$.

In order to obtain variable $\hat{r}_p(x)$ with PIM, a different neuron $u_i$ has to be used for each position $\{x_i:i\in\mathcal{I}_2\}$. If the data-generating distribution is known (or may be properly approximated), this is used to sample extra targets $\{y_{i;k}: k\in\mathcal{J}\}$ not known to $\hat{f}$, so PIM estimates the quantiles from the neuron $u_i$ having access to the errors $\varepsilon_{i;k}=y_{i;k}-\hat{f}(x_i)$ for $k\in\mathcal{J}$. A synthetic example of this is shown in Fig. \ref{fig:toy}. If the data-generating distribution is unknown, but serial correlations are important (i.e. the order of $i$ in $\mathcal{I}$ matters), then PIM may be applied if relative rather than absolute positions in feature space are relevant. This is done by having $|\mathcal{T}|$ different neurons $u_j$ learn quantiles from samples $W_i=[\varepsilon_{i}, \varepsilon_{i+1}, \cdots, \varepsilon_{i+|\mathcal{T}|-1}]$ of the joint error distribution for $i=1,2,\cdots,|\mathcal{I}_2|-|\mathcal{T}|+1$, provided there are enough samples in the validation set. By rolling the window $W_i$, a new sample from the joint distribution is obtained, and the neuron $u_j$ is trained with all the errors observed at the $j$-th position of all windows. A real-world example of this is shown in Fig. \ref{fig:lstm}. These two examples are described in more detail next.

\begin{figure}[!htb]
    \centering
    \begin{minipage}{.48\textwidth}
        \centering
        \includegraphics[width=\linewidth]{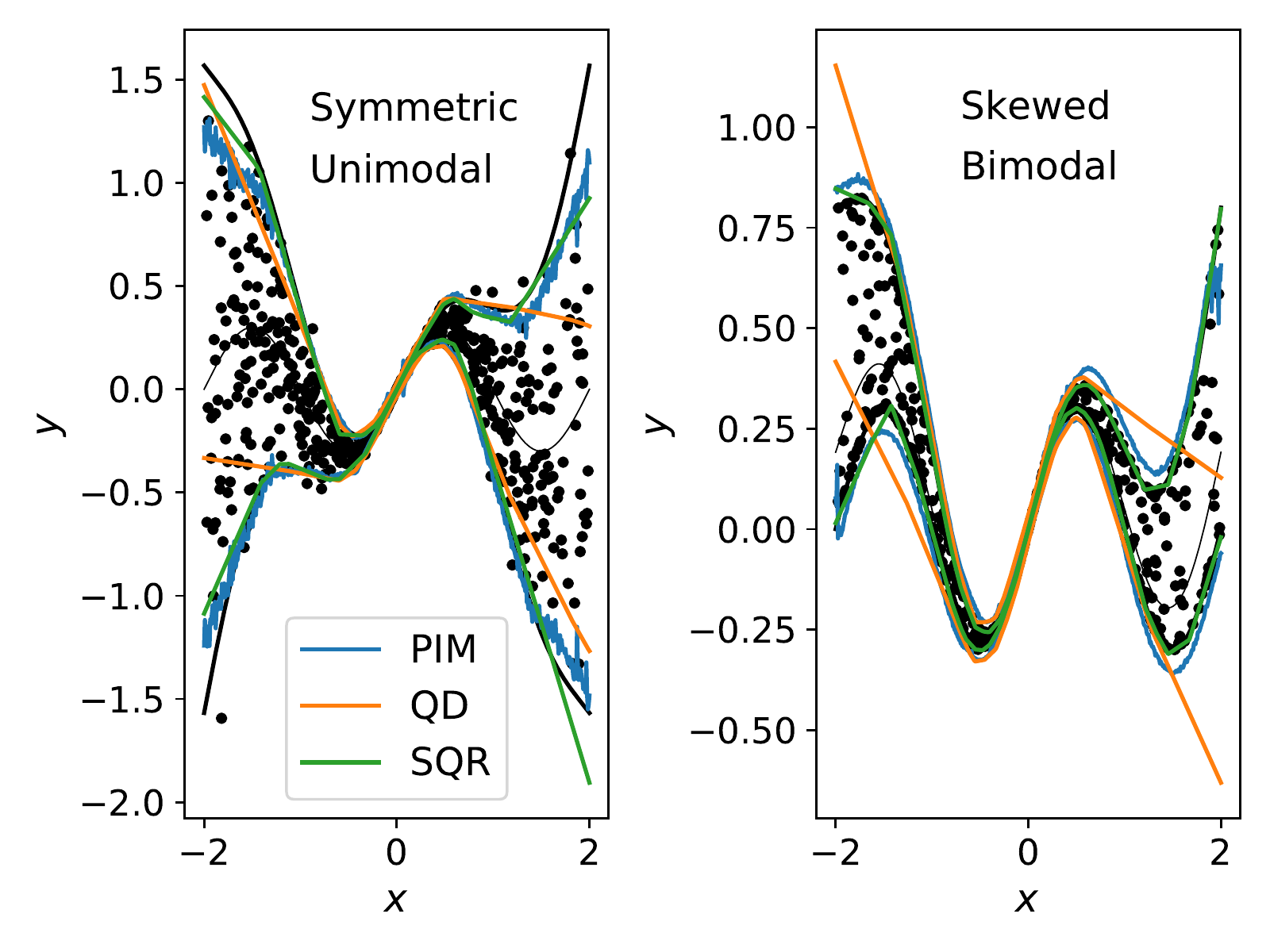}
    \end{minipage}\qquad
    \begin{minipage}{.4\textwidth}
        \centering
        \includegraphics[scale=0.34]{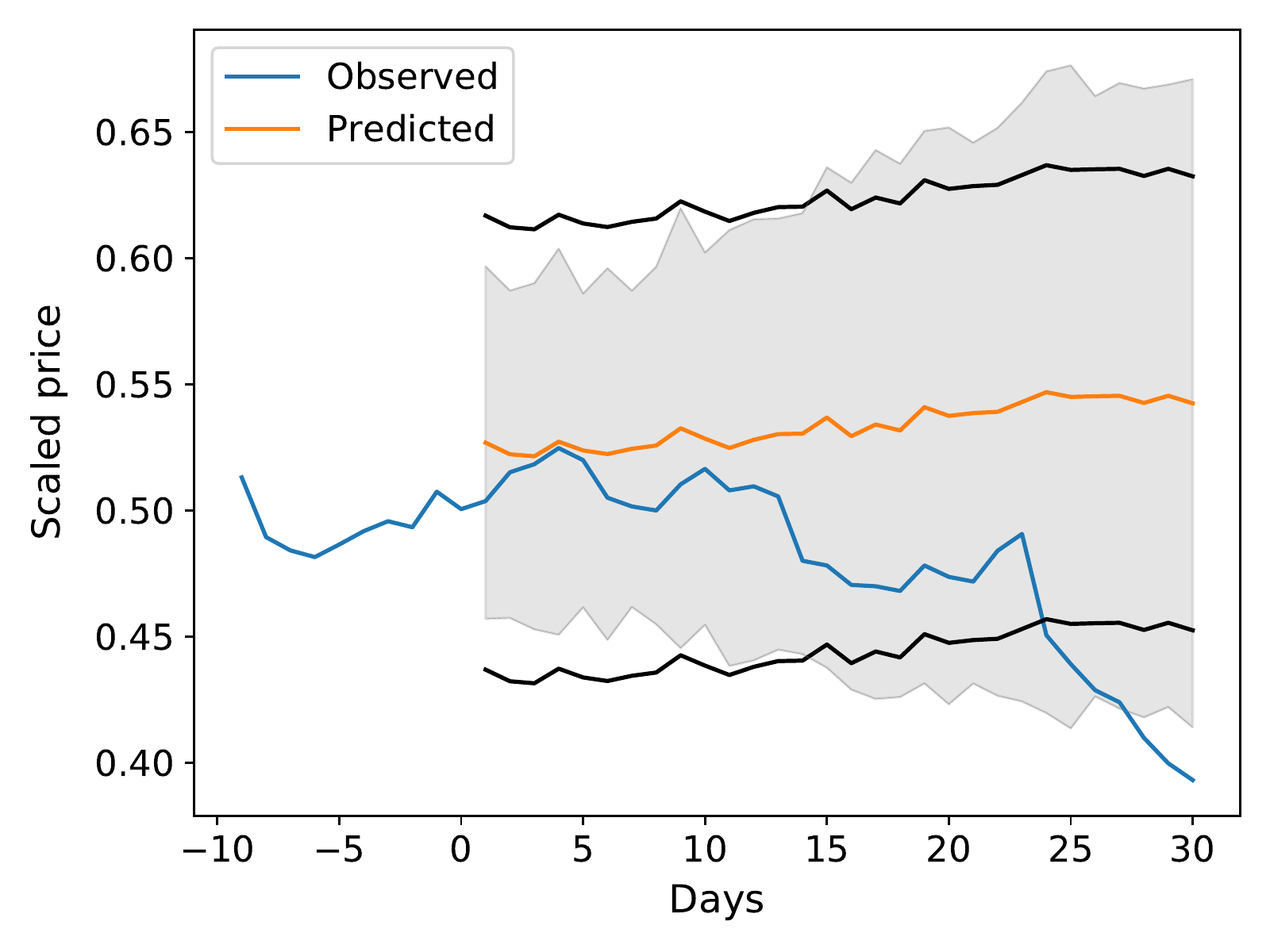}
    \end{minipage}

    \begin{minipage}[t]{.45\textwidth}
        \caption{Prediction intervals with $95\%$ confidence estimated by PIM, QD and SQR for synthetic data (black points) normally distributed and with a skewed Beta distribution. The black thick lines are the ideal boundaries, whereas the thin lines are the medians of the distributions.}
        \label{fig:toy}
    \end{minipage}\qquad
     \begin{minipage}[t]{.45\textwidth}
        \caption{Predicted and observed scaled price of the stock of General Electric for a horizon of $h=30$ days. The gray area are $95\%$ prediction intervals estimated by PIM; the black lines correspond to bounds based on a baseline assuming normally distributed errors.}
        \label{fig:lstm}
    \end{minipage}

\end{figure}

\textbf{Synthetic experiment}. The aim here is to compare the accuracy and computational efficiency of PIM against QD and SQR. For this, consider a one-dimensional data-generating process described by $y(x)=0.3\sin(x)+ \varepsilon_{\tm{obs}}(x)$ where $X\sim U(-2,2)$. For the error associated with observation, two cases are considered --- both having scale $\sigma(x)=0.2\,x^2$. The first is Gaussian error $E_{\tm{obs}}|X\sim N(0,\sigma^2(x))$, exemplifying a symmetric, unimodal distribution. The second is $E_{\tm{obs}}|X\sim \tm{Beta}(a,b,\tm{loc}=0, \tm{scale}=\sigma(x))$, exemplifying a skewed, bimodal distribution when $a<1$ and $b<1$. For concreteness, take $a=0.2$ and $b=0.3$. 

The conditional quantiles of the target may be expressed as $\mu_p(x)=0.3\sin(x)+\sigma(x)\mu_p$, where $\mu_p$ is the quantile function of the standardized error variable (computed by most statistical libraries for known distributions). From this, prediction intervals [$\mu_{(1-p)/2}(x)$, $\mu_{(1+p)/2}(x)$] may be calculated for the two cases of interest; these  are bounded by the black thick lines in Fig. \ref{fig:toy} for $p=0.95$. For visual aid of the symmetry/skeweness of the distributions, the median $\mu_{0.5}(x)$ is also shown as thin black lines. 

In a single trial of the experiment, a neural network $\hat{f}$ with 100 hidden units and output layer with \emph{one} unit is trained by sampling 500 pairs $P=\{(x_i,y_i): i\in\mathcal{I}\}$, shown as black points in Fig. \ref{fig:toy}. Since this experiment is synthetic, there is no need to split the training set. Instead, $\hat{f}$ is evaluated on a grid $G=\{x_j: j\in\mathcal{T}\}$ disjoint to $P$, partitioning $[-2,2]$ in 500 intervals of equal length. PIM is trained\footnote{For skewed distributions, two neurons independently learn $\hat{r}_{p}^L$ and $\hat{r}_{p}^U$ from $\varepsilon\le0$ and $\varepsilon>0$ respectively; the prediction intervals estimated as $[\hat{f}-\hat{r}_{p}^L, \hat{f}+\hat{r}_{p}^U]$.} on $G$ by sampling $|\mathcal{J}|=1000$ values of $y(x_j)$ for each $x_j$ in $G$, each neuron $u_j$ learning quantiles from $\{\varepsilon_{j,k}=y_k(x_j)-\hat{f}(x_j): k\in\mathcal{J}\}$. QD and SQR both train a neural network with 100 hidden units and output layer with \emph{two} units $(\hat{f}_L, \hat{f}_U)$. The model has the same hyperparameters as $\hat{f}$. However, for a fair comparison, the training set of $(\hat{f}_L, \hat{f}_U)$ is $P$ augmented with $|\mathcal{J}|$ more pairs disjoint to $G$. The results for a trial are shown in Fig. \ref{fig:toy}.

\begin{table*}[t]
\caption{Evaluating accuracy and efficiency of quantile estimation by three different methods. The results show median $\pm$ absolute median deviation over the trials. The notation $0.036\pm 0.001$ is simplified to $0.036\,(1)$.}
\label{tab:toy}
\vskip 0.1in
\begin{center}
\begin{small}
\begin{sc}
\begin{tabular}{cccc}
\toprule
Method & \# Parameters & Time & RMSE \\
\midrule
SQR & 402 & $0.080\,(9)$ & $\mathbf{0.09\,(3)}$\\
QD & 402 & $0.042\,(8)$ & $0.55\,(1)$\\
$\hat{f}+$ PIM & 301 & $\mathbf{0.036\,(1)}$ & $0.218\,(3)$\\
\bottomrule
\end{tabular}
\end{sc}
\end{small}
\end{center}
\vskip -0.1in
\end{table*}

The experiment is repeated for 10 trials. For each of them, the time taken to train + evaluate the models $(\hat{f}_L, \hat{f}_U)$ --- as well as training and evaluating $\hat{f}$ + train PIM --- is measured and normalized by the total duration of the 10 experiments. This together with the RMSE between estimations and ideal values is shown in Table \ref{tab:toy} for the case of normally distributed noise. As observed, the quantile estimation using PIM has less time and parameter complexity and thus more computationally efficient. In terms of accuracy, $\hat{f}+\tm{PIM}$ ranks in between SQR and QD despite the fact that $\hat{f}$ is trained with less data and has less parameters than $(\hat{f}_L, \hat{f}_U)$.

\textbf{Real-world experiment} The aim here is to demonstrate that PIM may be used in realistic contexts where varying prediction intervals are needed, specially when the uncertainty of interest is related to relative rather than absolute positions in feature space.  As an illustration, the uncertainty in the prediction of the stock price of General Electric is considered. A LSTM model $\hat{f}$ learns to map features of the last $T=10$ observations to the next $h=|\mathcal{T}|=30$ target close prices. This is done in a training set with the first 9840 samples of daily data from 1962 to 2001. The trained model $\hat{f}$ is evaluated on a validation set consisting of the next 4218 samples, where PIM learns prediction intervals corresponding to $h$ consecutive predictions, using neurons $u_j$ for $j\in \mathcal{T}$. These are placed around the predictions on the held-out \emph{test} set shown in  Fig. \ref{fig:lstm}. These are compared with a popular baseline \cite{hyndman}, consisting of bounds $\pm\, z_p \,\hat{\sigma}_h$ derived by assuming that the errors are normally distributed, with $z_p$ being the z-score. If the forecasts of all $h$ future prices in the test set are assumed to coincide with the average of the past $T$ observations (which is roughly the case in Fig. \ref{fig:lstm}), then it can be shown that $\hat{\sigma}_h=\hat{\sigma}\sqrt{1+1/T}$, where $\hat{\sigma}$ is the standard deviation of the $h$ error samples in the test set.

Apart from PIM having better coverage than the baseline and having narrower prediction interval widths at the beginning of the test sequence (hence better quality), this example shows how PIM captures the epistemic uncertainty resulting from $\hat{f}$ knowing better that predictions for tomorrow should be close (by continuity of $f$) to observations today, giving rise to the cone-shaped uncertainty region. This information is cheap: while the inference time of the LSTM is about $3.9$ sec, PIM only takes about $0.4$ sec to obtain the prediction intervals from the validation set. 

\section{Using a single neuron to estimate real-world uncertainty}\label{sec:reg}
It has been shown that training a model $\hat{f}$ that learns the mean --- hopefully equal or close to the median --- of the target distribution and using PIM to estimate its quantiles is more efficient and has similar accuracy than having a bigger model $(\hat{f}_L, \hat{f}_U)$ learning the boundaries of the prediction intervals directly. Also, the quantile estimation for small sample size can be more accurate using PIM than ranking the order statistics. With all these benefits, would you use it in your applications?

The answer to this depends on the dataset. Since the training set has to be split into a proper training set $\{(x_i,y_i): i\in\mathcal{I}_1\}$ to fit $\hat{f}$ and a validation set $\{(x_i,y_i): i\in\mathcal{I}_2\}$ to train PIM, the resulting size $|\mathcal{I}_2|$ of the validation set might not be enough for PIM to get accurate results. Also, the amount of heteroskedasticity in the dataset may invalidate using a single neuron learning $\hat{r}_p$. The effect of these two factors is investigated next for real-world datasets, having the results from QD as a baseline. That is, we follow the experimental protocol established by Lobato \emph{et al}\cite{lobato} for the popular UCI regression benchmark. This assigns $90\%$ of the data (from 10 different datasets) for training uncertainty estimation models and $10\%$ for testing them, in an ensemble of mostly 20 random shuffles of the train-test partition. 

To apply PIM, the $80\%$ of the resampled training set of each dataset is used to train the nominal neural network $\hat{f}$, which is evaluated on the remaining $20\%$, where PIM is trained from the corresponding prediction errors. The best between [$\hat{f}-\hat{r}_p, \hat{f}+\hat{r}_p$] and [$\hat{f}-\hat{r}_p^L, \hat{f}+\hat{r}_p^U$], in addressing coverage and quality in the \emph{test} sets, is chosen. Therefore, the test MPIW of the QD method is compared to either $2\hat{r}_p$ or $\hat{r}_p^L+\hat{r}_p^U$, depending on which is smaller, and which PICP (which is nothing but the $F_m$ of section \ref{sec:PIM}) is closer to the nominal $p=0.95$. As in the synthetic experiments of the previous section, note that the QD model, besides having more outputs $\hat{f}_L$ and $\hat{f}_U$ (hence more weights), is trained on more data than $\hat{f}$. 

A measure of heteroskedasticity of the datasets is needed in order to better understand the resulting estimations. For this, a White test \cite{White} is done in every fold used to train PIM. This looks for linear dependency of the variance $\mathbb{E}(\xi^2)$ of residuals $\xi$ (from a linear regression of $y$ on $x$) on all features in $x$ and their interactions. The proportion of significant tests in the ensemble, according to the p-value of the F-statistic, is denoted by $P_{\tm{SIG}}$ and reported as a percentage. This gives the percentage of times that the null hypothesis of homoskedastic residuals is rejected; giving a sense of residual variability among the validation folds of the ensemble but not how ``strong'' that variability is within a fold. 

To quantify the degree of variability of the residuals $\xi_i$ used for the White tests, the \emph{normalized} power spectral entropy (PSE) of such residuals is proposed
\begin{equation}
 \tm{PSE}=-\frac{1}{\log |\mathcal{I}_2|}\sum_{i\in\mathcal{I}_2} p_i \log p_i,
\end{equation}
where $p_i=|\xi_i|^2/\sum_i |\xi_i|^2$ normalizes the square amplitude of the $i$-th spectral component of $\xi$ (found by a fast Fourier transform). The intuition is that \emph{patterns} in $\xi$ have a low entropy $\tm{PSE}\rightarrow0$ whereas homoskedastic-like residuals (e.g. white noise) have high entropy $\tm{PSE}\rightarrow1$.

\begin{table*}[t]
\caption{\emph{Test} quality metrics for prediction intervals in relevant datasets: mean $\pm$ std dev over the ensemble. Best results in bold, compared according to the criteria and results of the QD authors \cite{pearce}, that is: if $\tm{PICP}\ge0.95$ for QD and PIM, both were best for PICP, and best MPIW is given to the smallest MPIW. If $\tm{PICP}\ge0.95$ for neither or for only one, largest PICP was best, and MPIW assessed if the one with larger PICP also has smallest MPIW.} 
\label{tab:PIMvsQD}
\vskip 0.1in
\begin{center}
\begin{small}
\begin{sc}
\begin{tabular}{cccccccc}
\toprule
\multirow{2}{*}{Dataset} &
\multirow{2}{*}{$P_{\tm{SIG}}$} &
\multirow{2}{*}{PSE} &
\multicolumn{2}{c}{PICP} &
\multicolumn{2}{c}{MPIW} \\
&  &  & QD-Ens & PIM-Ens & QD-Ens & PIM-Ens\\
\midrule
Yacht &$65$& $0.80\,(6)$& $\mathbf{0.96\,(1)}$ & $0.90\,(6)$& $\mathbf{0.17\,(0)}$ &$0.26\,(7)$  \\
Boston&$70$& $0.87\,(6)$& $\mathbf{0.92\,(1)}$ & $0.84\,(8)$& $1.16\,(2)$ &$0.7\,(1)$  \\
Energy &\cellcolor{llgray}$100$& \cellcolor{llgray}$0.80\,(2)$& \cellcolor{llgray}$\mathbf{0.97\,(1)}$ &\cellcolor{llgray} $\mathbf{0.95\,(2)}$& \cellcolor{llgray}$0.47\,(1)$ &\cellcolor{llgray}$\mathbf{0.21\,(4)}$ \\
Concrete & $100$& $0.74\,(2)$& $\mathbf{0.94\,(1)}$ & $0.90\,(4)$& $1.09\,(1)$ &$1.04\,(9)$  \\
Red Wine &$75$& $0.78\,(2)$& $\mathbf{0.92\,(1)}$ & $0.82\,(9)$& $2.33\,(2)$ &$1.9\,(3)$  \\
Kin8nm &\cellcolor{llgray}$100$& \cellcolor{llgray}$0.72\,(1)$& \cellcolor{llgray}$\mathbf{0.96\,(0)}$ & \cellcolor{llgray}$\mathbf{0.95\,(1)}$& \cellcolor{llgray}$1.25\,(1)$ & \cellcolor{llgray}$\mathbf{1.17\,(6)}$  \\
Power Plant & \cellcolor{llgray}$90$& \cellcolor{llgray}$0.84\,(5)$& \cellcolor{llgray}$\mathbf{0.95\,(0)}$ & \cellcolor{llgray}$\mathbf{0.95\,(1)}$& \cellcolor{llgray}$\mathbf{0.86\,(0)}$ & \cellcolor{llgray}$\mathbf{0.87\,(2)}$ \\
Naval &\cellcolor{llgray}$100$& \cellcolor{llgray}$0.88\,(1)$& \cellcolor{llgray}$\mathbf{0.98\,(0)}$ & \cellcolor{llgray}$\mathbf{0.95\,(1)}$&\cellcolor{llgray} $0.28\,(1)$ &\cellcolor{llgray}$\mathbf{0.23\,(9)}$ \\
Protein &$100$& $0.66\,(0)$& $\mathbf{0.95\,(0)}$ & $0.94\,(0)$& $\mathbf{2.27\,(1)}$ &$2.65\,(1)$  \\
Song Year &$100$& $0.86\,(\cdot)$& $\mathbf{0.96}\,(\cdot)$ & $\mathbf{0.95}\,(\cdot)$& $\mathbf{2.48}\,(\cdot)$ &$3.12\,(\cdot)$  \\
\bottomrule
\end{tabular}
\end{sc}
\end{small}
\end{center}
\end{table*}

The results of the comparison with QD are shown in Table \ref{tab:PIMvsQD}, where -ENS is appended to the acronyms of the methods to mean that the results are averages over the ensemble. The first observation is that, despite all datasets being heteroskedastic, the uncertainty estimations made by PIM in the validation sets generalize well into the test sets (better or similar to QD in the shaded cases).

The cases where PIM fails to converge to the desired PICP are those with significant variability among the validation folds (low $P_{\tm{SIG}}$) or appreciable presence of patterns in the errors (low PSE), as expected. Data size is also important since \emph{Kin8nm} is comparable to \emph{Concrete} in terms of $P_{\tm{SIG}}$ and PSE, but the former is more than 8 times bigger than the latter. This data-size dependence is evident from the table, where PIM has good performance mostly for the lower half datasets.\footnote{The datasets are ordered in size from top (308 samples) to bottom (515.345 samples).} For the biggest dataset, PIM does not excel presumably due to better calibrated predictions of QD's $(\hat{f}_L,\hat{f}_U)$ over the $\hat{f}$ feeding PIM --- expected from the flexibility of the former in terms of more network weights and more data to train them.

The second observation is that, in the successful cases, the test PICP achieved by PIM coincides with the intended confidence level $p=0.95$. As seen, this is not necessarily the case for QD, since it targets $\tm{PICP}\ge p$. However, for a \emph{continuous} target variable $Y$ --- as in many regression problems of practical interest --- $\tm{PICP}=p$ must be (asymptotically) satisfied at the $p$-quantile. Restricting $\tm{PICP}\ge p$ in these cases may lead to the same quantile estimation describing different confidence levels; the quantile-crossing phenomenon \cite{Takeuchi} that should be avoided.

The results above show that using PIM with a single neuron may give accurate estimation of high-quality prediction intervals even for heteroskedastic datasets with \emph{weak} serial correlations (i.e. patterns) of the prediction error samples. Therefore, in the exploratory data analysis of a given application, heteroskedasticity tests in the dataset may reveal whether or not to leverage from the efficiency of using a single neuron for uncertainty estimation.

\section{Related work}
The split conformal prediction literature \cite{Lei_2018} uses a setting similar to PIM, estimating quantiles by ranking the order statistics of prediction errors from a given $\hat{f}$. They give finite sample coverage guarantees by assuming \emph{exchangeability} of the training samples, which may not apply, for instance, for non-stationary stochastic processes typically found in real-world time series. Recently, Romano \emph{et al}\cite{Romano_2019} have extended this framework to produce varying $\hat{r}_p(x)$ by combining the split conformal prediction formalism with quantile regression from $(\hat{f}_L, \hat{f}_U)$. As shown, PIM has the advantage of being more flexible to high-quality quantile estimation from small sample sizes compared to ranking the order statistics. This may be important in those cases for which the size of the validation set is a small fraction of the training set.

Recent research on uncertainty estimation focuses on studying the different sources of uncertainty \cite{survey_uncertainty2021,review2021}. Prediction intervals capture the aleatoric uncertainty associated to the noisy data observation process \cite{pearce, SQR, NormSplit}. Epistemic uncertainty involves model specification and data distributional changes; the latter maintaining recent interest \cite{Malinin, hafner2019, SQR, Li_2020_CVPR, dist_aware, Charpentier, latent_unc_2020}; the former being less studied. Examples in deep learning of uncertainty due to model specification include network-depth uncertainty \cite{DepthUC} and uncertainty over the number of nodes in model selection \cite{mod_sel19}.

The incompleteness in problem formalization behind machine learning models (intimately connected to model specification) leads to a need for their interpretability \cite{towards_interp}. This need became more urgent in 2016, when the European Parliament published the General Data Protection Regulation, demanding (among other clauses) that, by May 2018, all algorithms have to provide ``meaningful explanations of the logic involved'' when used for decision making significantly affecting individuals (right of people to an explanation). Consequently, techniques to explain AI models started to permeate the literature. Ribeiro \emph{et al}\cite{Ribeiro16} made a case for model-agnostic interpretability of machine learning; while Rudin\cite{Rudin2019} argued in favor of designing predictive models that are themselves interpretable. Different reviews arose \cite{rev_interpret, surv_explain} to clarify concepts and classify the increasing body of related research. 

The current understanding \cite{rev_expl_2020} is that a model is interpretable if, by itself, is understandable (e.g. linear/logistic regression, decision trees, $K$-nearest neighbors, rule-based learners, Bayesian models). If not, it needs post-hoc explainability (e.g. tree ensembles; SVM; multi-layer, convolutional and recurrent neural networks). Post-hoc explainability is done by feature relevance analysis or visualization techniques, but most often by a second \emph{simplified}, and hence interpretable, model  which mimics its antecedent. Our approach to uncertainty estimation goes along lines similar to the latter: a predictive model $\hat{f}$ is considered as a black box and a second system (a single neuron) estimates how uncertain the black box is.

Although using a model to learn from a black box has been explored in a context related to uncertainty, e.g. calibration of neural networks using Platt scaling \cite{calib_reg}, it was not until recently that such a method is directly used to upgrade any black-box predictive API with an uncertainty score \cite{brando2020}. However, their wrapper is based on deep neural networks and hence is not interpretable. PIM \emph{is not} a parametric model, but uses a globally interpretable neural network with a single unit.

\section{Conclusion}
In this work, a non-parametric method to estimate predictive uncertainty of a pre-trained model is introduced. The method is competitive with state-of-the-art solutions in quality, with the additional benefit of giving uncertainty estimates more efficiently (i.e. no need to add extra layers or outputs to a predictive model). Although the method does not \emph{predict} the uncertainty of new data samples based on their feature values --- perhaps making it less attractive, as it deviates from the established machine learning paradigm --- it does give a sense of a safer uncertainty estimation, just because it does not inherit the learning biases of the predictive models. This makes it suitable for rather \emph{explaining} how uncertain a given model (treated as a black-box) is, and hence serving as a reliable guide to decision-making. Extensions of the method to estimate uncertainty in the classification setting is given in appendix \ref{sec:PIM_classif}.

\section*{References}
\noindent

\bibliography{pim}
\interlinepenalty=10000
\bibliographystyle{ws-ijufks}

\newpage
\beginsupplement 
\section*{Appendices}
A theoretical ground is given here to key aspects of the paper together with extra details about the numerical experiments\footnote{The source code can be found at \url{https://github.com/sola-ed/pim-uncertainty}}. Moreover, PIM is applied to binary classification in order to give more support to the claim that it can be more accurate than ranking the order statistics for small sample size. This is done, on real-world datasets, by comparing confidence intervals for the accuracy of classification, as estimated by PIM, with the corresponding estimations using the Bootstrap method.

The material in the following is organized as follows: section \ref{sec:PIM_theory} summarizes the main theoretical assumptions behind PIM, making it an asymptotically consistent estimator. A heuristic for identification of finite-sample convergence is discussed in section \ref{sec:PIMfinite}. Details about the regression experiments using the UCI datasets are given in section \ref{sec:UCI_reg}. Finally, section \ref{sec:PIM_classif} applies PIM in the classification context. It starts in section \ref{sec:bin_classif} with the problem formulation and proceeds with the experimental results comparing PIM with the Bootstrap method. A proof of the main theoretical result of the section is given in \ref{sec:classif_theory}, and details of the experiments are found in section \ref{sec:classif_exp}.

\subsection{PIM as a consistent estimator}\label{sec:PIM_theory}
The main result is summarized in Theorem \ref{thm:1} below. In order to prove it, we go in steps by first showing that for the distribution functions of interest, the $p$-th quantile is unique, given the confidence level $p$. This uniqueness guarantees that the loss function of PIM has asymptotically only one minimum and then gradient descent will converge to it, given small enough learning rates. The uniqueness is proved in the following lemma:

\begin{lemma}\label{unique}
 Let $F(\varepsilon)=\Pr(E\le \varepsilon)$ be a strictly increasing and continuous distribution function and $p\in(0,1)$. Then, the $p$-quantile $r_{p}$ is unique.
\end{lemma}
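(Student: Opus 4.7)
The plan is to argue uniqueness by contradiction, leveraging strict monotonicity, and to briefly verify existence via continuity so that the object $r_p$ whose uniqueness we are claiming is indeed well-defined. The paper's equation~(\ref{eq:Fp}) already tells us that, in the continuous setting, a $p$-quantile is any solution of $F(r_p)=p$, so the lemma reduces to showing this equation has exactly one solution in $\mathbb{R}$.

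For existence, I would note that any distribution function satisfies $\lim_{\varepsilon\to-\infty}F(\varepsilon)=0$ and $\lim_{\varepsilon\to+\infty}F(\varepsilon)=1$. Since $p\in(0,1)$, we can pick $a<b$ with $F(a)<p<F(b)$. By the continuity hypothesis on $F$, the intermediate value theorem yields at least one $r_p\in(a,b)$ with $F(r_p)=p$. This confirms that the set of $p$-quantiles is nonempty and the statement of the lemma is meaningful.

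For uniqueness, I would proceed by contradiction. Suppose there exist two $p$-quantiles $r_p$ and $r_p'$ with $r_p\neq r_p'$, and without loss of generality assume $r_p<r_p'$. By the assumption that $F$ is strictly increasing, we would have $F(r_p)<F(r_p')$. But by the defining equation (\ref{eq:Fp}), both satisfy $F(r_p)=p$ and $F(r_p')=p$, so $F(r_p)=F(r_p')$, contradicting the strict inequality. Hence no two distinct $p$-quantiles can coexist.

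There is no serious obstacle here: the lemma is essentially a restatement of the fact that a strictly monotone function is injective, combined with the intermediate value theorem to guarantee existence. The only subtlety worth stating carefully is that continuity is needed to place $r_p$ at the exact level $p$ (existence), while strict monotonicity is needed to rule out a flat plateau on which $F$ could equal $p$ over a whole interval (uniqueness). Both hypotheses of the lemma are thus used in an essential way, and together they make $F$ a bijection from $\mathbb{R}$ onto $(0,1)$, on which $F^{-1}$ is well-defined and single-valued, giving $r_p=F^{-1}(p)$.
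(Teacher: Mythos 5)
Your proposal is correct and follows essentially the same route as the paper: continuity (via an intermediate-value-theorem argument) supplies existence of a solution to $F(r_p)=p$, and strict monotonicity supplies uniqueness. The only cosmetic difference is that you invoke the IVT globally using $\lim_{\varepsilon\to\pm\infty}F(\varepsilon)$, whereas the paper applies Bolzano's theorem on a small interval $[r_p,r_p+\epsilon]$ starting from the general quantile definition $F(r_p)\le p\le F(r_p+\epsilon)$, but the logical content is the same.
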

\begin{proof}
 A $p$-quantile of $F$ is a number $r_{p}$ satisfiying $F(r_{p})\le p$ and $F(r_{p}+\epsilon)\ge p$, for $\epsilon\rightarrow0^{+}$ \cite{bookST}. Since $F$ is continuous, Bolzano's theorem states that there is at least one point in the interval $[r_{p},r_{p}+\epsilon]$ where $F(r_{p})-p=0$. That there is only \emph{one} such point clearly follows from $F$ being strictly increasing. Therefore, as $\epsilon\rightarrow0^{+}$, $r_{p}$ becomes the unique value where $F(r_{p})=p$.
\end{proof}

\begin{lemma}\label{borel}
 Let $\lbrace\varepsilon_i\rbrace$, with $i=1,2,\cdots,m$, be a sequence of independent draws of the random variable $E$, according to the distribution $F(\varepsilon)=\Pr(E\le \varepsilon)$. With $\mathbbm{1}$ being the indicator function, define $F_m(r_p)=\tfrac{1}{m}\sum_{i=1}^{m}\mathbbm{1}(\varepsilon_i \le r_{p})$. Then, for all $r_{p}$ and with probability one, $F_m(r_p)$ converges to $F(r_p)$ in the limit $m\rightarrow\infty$.
\end{lemma}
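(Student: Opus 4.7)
The plan is to recognize this lemma as a direct application of the strong law of large numbers to Bernoulli trials, which is precisely the content of Borel's theorem already invoked by name in section \ref{sec:PIM} of the main text. First I would define $Z_i := \mathbbm{1}(\varepsilon_i \le r_p)$ for $i=1,\ldots,m$. Because the $\varepsilon_i$ are i.i.d.\ with distribution $F$, the $Z_i$ inherit the i.i.d.\ property and each $Z_i$ is Bernoulli with success probability $\Pr(Z_i=1)=\Pr(\varepsilon_i\le r_p)=F(r_p)$, so $\mathbb{E}[Z_i]=F(r_p)$ is finite and the variance $\mathrm{Var}(Z_i)=F(r_p)[1-F(r_p)]$ is uniformly bounded by $1/4$. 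Note that the fixed quantile $r_p$ need not be a sample quantile here; it is simply an arbitrary real number, so strict monotonicity or continuity of $F$ (used in Lemma \ref{unique}) plays no role in the argument.

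Next I would observe that $F_m(r_p)=\tfrac{1}{m}\sum_{i=1}^{m}Z_i$ is exactly the sample mean of these i.i.d.\ bounded variables. Hence Kolmogorov's strong law of large numbers -- or equivalently the Borel formulation for Bernoulli trials -- gives $F_m(r_p)\to F(r_p)$ almost surely as $m\to\infty$, which is the desired conclusion. Since this holds separately for each $r_p\in\mathbb{R}$, the "for all $r_p$" quantifier in the statement is satisfied in the pointwise sense (a stronger uniform version would be the Glivenko-Cantelli theorem, but only pointwise convergence is needed for the subsequent argument in the paper, where $r_p$ is the particular quantile being learned).

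If one preferred not to quote the SLLN as a black box, an elementary alternative would be to combine Chebyshev's inequality with Borel-Cantelli. Using the bound $\mathrm{Var}(F_m(r_p))\le 1/(4m)$, one gets $\Pr(|F_m(r_p)-F(r_p)|>\eta)\le 1/(4m\eta^2)$, which is not summable; this is patched by passing to a subsequence $m_k=k^2$, obtaining a.s.\ convergence along that subsequence via Borel-Cantelli, and then trapping intermediate indices using the monotonicity of $k\mapsto\sum_{i=1}^{k}Z_i$. The main ``obstacle'' is really just deciding how much machinery to import: there is no analytical subtlety, because recognizing the $Z_i$ as i.i.d.\ bounded variables reduces the claim to a textbook result, and the only care needed is to keep $r_p$ fixed while $m\to\infty$ so that we remain in the i.i.d.\ setting required by the SLLN.
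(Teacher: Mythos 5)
Your proof is correct and follows the same route as the paper, which simply cites Borel's law of large numbers; you have merely spelled out the reduction to i.i.d.\ Bernoulli variables $Z_i=\mathbbm{1}(\varepsilon_i\le r_p)$ with mean $F(r_p)$, which is implicit in that citation. The alternative Chebyshev--Borel--Cantelli argument you sketch is a valid elementary fallback but is not needed, since the SLLN applies directly to bounded i.i.d.\ variables.
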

\begin{proof}
 This is Borel's law of large numbers.
\end{proof}

\setcounter{theorem}{0}
\begin{theorem}\label{thm:1}
 Let $F(\varepsilon)=\Pr(E\le \varepsilon)=\int_{-\infty}^{\varepsilon}\rho(x)dx$ be a strictly increasing and continuous error distribution function associated to the random variable $E$, with $\rho$ being the corresponding probability density function. If $m$ samples are independently drawn from it, then PIM (with $\beta\rightarrow\infty$) evaluated on these samples, converges to the unique value $r_{p}$ for which $F(r_{p})=p$, when $m\rightarrow\infty$.
\end{theorem}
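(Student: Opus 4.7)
The plan is to chain together the two preceding lemmas and then identify the unique minimizer of the limiting loss, finishing with a standard gradient-descent argument.

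First, I would take the limit $\beta\to\infty$ in the sigmoid smoothing. Since $\sigma(\beta\,\cdot)$ converges pointwise to the Heaviside step (with continuous endpoints handled on a null set under $\rho$), the training loss the neuron actually minimizes collapses to $\mathcal{L}_m(w_p)=[F_m(w_p)-p]^2$. By Lemma~\ref{borel} (Borel), for each fixed $w$ the empirical CDF satisfies $F_m(w)\to F(w)$ almost surely, so
\begin{equation*}
\mathcal{L}_m(w_p)\;\xrightarrow{\text{a.s.}}\;\mathcal{L}_\infty(w_p):=[F(w_p)-p]^2.
\end{equation*}
The Glivenko--Cantelli strengthening of Lemma~\ref{borel} additionally gives that this convergence is uniform on compact subsets of $\mathbb{R}$, which I will use to transfer minimizers.

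Next, I would analyze $\mathcal{L}_\infty$. Because $p\in(0,1)$ and $F$ is continuous and strictly increasing with range $(0,1)$, Lemma~\ref{unique} guarantees a unique $r_p$ with $F(r_p)=p$. Since $\mathcal{L}_\infty\ge 0$ and vanishes only at $r_p$, this $r_p$ is the unique global minimum of $\mathcal{L}_\infty$. Differentiating, $\mathcal{L}_\infty'(w)=2[F(w)-p]\,\rho(w)$; strict monotonicity of $F$ forces $\rho(w)>0$ on the support, so $r_p$ is also the unique stationary point, with $\mathcal{L}_\infty$ strictly decreasing on $(-\infty,r_p)$ and strictly increasing on $(r_p,\infty)$.

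Finally, because the limiting landscape is unimodal with its unique minimum at $r_p$, gradient descent with a sufficiently small learning rate monotonically decreases $\mathcal{L}_\infty$ along its trajectory and converges to $r_p$ for any initialization in a neighborhood large enough to escape the asymptotically flat tails of $F$ (where $\mathcal{L}_\infty'$ vanishes in the limit but with non-zero loss, ruling these regions out as limits). Uniform-on-compacts convergence $F_m\to F$ then lifts this conclusion to the actual iterates on $\mathcal{L}_m$: their limit points are minimizers of $\mathcal{L}_\infty$, and by uniqueness this limit must be $r_p$ almost surely.

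The main obstacle is the joint handling of the two limits $\beta\to\infty$ and $m\to\infty$ alongside the discrete gradient-descent trajectory, rather than the limiting loss alone. I would sidestep this by first sending $\beta\to\infty$ at fixed $m$ (reducing training to gradient descent on $\mathcal{L}_m$), and then invoking the uniform convergence $F_m\to F$ and the strict unimodality of $\mathcal{L}_\infty$ to argue continuity of the $\arg\min$, which is the cleanest route to the stated asymptotic consistency.
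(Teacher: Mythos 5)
Your proof follows essentially the same route as the paper's: pass $\beta\to\infty$ to recover the empirical CDF, invoke Borel's law (Lemma~\ref{borel}) to obtain the asymptotic loss $[F(w_p)-p]^2$, compute its gradient $2[F(w_p)-p]\rho(w_p)$, use positivity of $\rho$ on the support plus Lemma~\ref{unique} to conclude gradient descent with small step size reaches the unique zero $r_p$. Your additions (Glivenko--Cantelli uniform convergence and argmin-continuity to rigorously transfer from the limiting loss to the finite-$m$ iterates) tighten a step the paper's terse proof leaves implicit, but they do not change the underlying argument.
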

\begin{proof}
 In the limit $\beta\rightarrow\infty$, the $F_m(r_{p})$ in PIM coincides with the $F_m(r_{p})$ in Lemma \ref{borel}. Using this Lemma, the loss function in PIM is asymptotically $\mathcal{L}_{p}(\varepsilon)=(F(\varepsilon)-p)^{2}$. Its gradient is $\nabla_{\varepsilon}\mathcal{L}_{p}(\varepsilon)=2[F(\varepsilon)-p]\,\rho(\varepsilon)$. Since $\varepsilon$ is in the support of $E$, $\rho(\varepsilon)\neq0$, then gradient descent, with a small enough learning rate, leads PIM to converge to the value of $\varepsilon$ for which $F(\varepsilon)-p=0$. By Lemma \ref{unique}, there is only one such value, being the $p$-quantile $r_{p}$.
\end{proof}

\subsubsection{Convergence for finite validation sets}\label{sec:PIMfinite}
It is observed in the numerical experiments that the loss in PIM smoothly decreases and saturates about a small value. By using early stopping during optimization, the optimal value $\hat{r}_{p}$ is taken as the point where this saturation takes place. It is argued in this section why such heuristic approach makes sense. For this, it is convenient to think of the current value $w_p$ of the weight of the single neuron as following a trajectory parameterized by the epochs.

In practice, $w_{p}$ is updated when the optimizer processes a batch and, at the end of a training epoch, all batches have been processed. The training epochs can then be thought of as values achieved by a continuous variable $t$, which changes as $w_{p}$ goes from its initial value, along a \emph{smooth} trajectory $w_p(t)$, to the optimal value $\hat{r}_{p}$. Withouth loss of generality, it is supposed that these trajectories have no turning points, i.e. they monotonically increase or decrease the initial value $w_p(0)$ towards $\hat{r}_{p}$. Furthermore, the rate at which this happens is bounded:
\begin{equation}\label{lin}
 |\nabla_t w_p(t)| \le c_{p},\hspace{0.5cm}\tm{with}\;\;0<c_{p}<\infty.
\end{equation}

From the proof of Theorem \ref{thm:1} for infinite sample size, $\nabla_{\varepsilon}\mathcal{L}_{p}(\varepsilon)=2[F(\varepsilon)-p]\,\rho(\varepsilon)$, so from \eqref{lin},
\begin{equation}\label{gradL}
|\nabla_{t}\mathcal{L}_{p}(t)|\le 2c_{p}|F(w_p(t))-p|\,\rho(w_p(t)). 
\end{equation}
A hypothetical algorithm, running with infinite validation set, will start at $t=0$, from $w_p(0)$, with sucessive updates generated (assuming a plain SGD optimizer) as
\begin{equation}\label{trajI}
 w_p(t+dt)=w_p(t)-\eta\nabla_{t}\mathcal{L}_{p}(t),
\end{equation}
where $\eta$ is the learning rate and $dt = B/m$, with $B$ and $m$ being the batch and validation set sizes, respectively. The sizes $B$ and $m$ can be selected so that $dt$ is fixed, and arbitrarily small, when $B\rightarrow\infty$ and $m\rightarrow\infty$. 

The convergence of PIM to the optimal value $\hat{r}_{p}$ can be considered, in practical terms, as related to the saturation of the loss function $\mathcal{L}_{p}(t)$. Given a small enough tolerance $\sigma_{p}$, the algorithm is said to converge to $\hat{r}_{p}$ at epoch $t_*$ if $|\nabla_{z}\mathcal{L}_{p}(t_*)|\leq\sigma_{p}$, at which point the loss has saturated. If PIM is stopped at $t_*$, the error committed in estimating the $p$-quantile of $F$ is, up to first order in $\sigma_{p}$,
\begin{equation}
 |r_{p} -\hat{r}_{p}|=\dfrac{\sigma_{p}}{2c_{p}[\rho(r_{p})]^{2}},
\end{equation}
which is obtained by evaluating \eqref{gradL} at $t_*$, writing $\hat{r}_{p}=w_p(t_*)$, and expanding $F$ and $\rho$ around $r_{p}$, giving $|\nabla_{t}\mathcal{L}_{p}(t_*)|\leq\sigma_{p}$.

\paragraph{Finite validation sets.}In this case, the trajectories are not generated by \eqref{trajI} anymore. Here, $dt$ is not arbitrarily small, i.e. $\min \lbrace dt\rbrace = 1/m$, which happens when a batch contains only one data sample.\footnote{In practice, the batch size was taken to be equal to the sample size though in order to exploit the asymptotic properties behind PIM.} The trajectories are still considered smooth and with bounded speed, but now the values of $w_p$ updated by PIM are more sparse. These trajectories are generated by the loss $\mathcal{L}_{p}^{m}(t)=[F_m(t)-p]^2$, i.e.
\begin{equation}
 w_p(t+dt)=w_p(t)-\eta\nabla_{t}\mathcal{L}_{p}^{m}(t).
\end{equation}
Assuming the same constants $c_{p}$ serve as upper bounds to all the possible speeds,
\begin{equation}\label{gradLF}
 |\nabla_{t}\mathcal{L}_{p}^{m}(t)|\le 2c_{p}|F_m(w_p(t))-p|\,|F_m^{\prime}(w_p(t))|. 
\end{equation}
Saturation of the loss is understood as making \eqref{gradLF} as small as possible. Clearly, since the gradient $F_m^{\prime}(w_p(x))$ is bounded and does not vanish, this saturation happens at the epoch $t_*$ of closest approach between $F_m(w_p(t))$ and $p$, that is, 
\begin{equation}
 t_* = \argmin_{t\in[0,\infty)}|F_m(w_p(t))-p|.
\end{equation}
Again, denoting $\hat{r}_{p}=w_p(t_*)$, and using the triangle inequality,
\begin{equation}
|F_m(\hat{r}_{p})-p|\leq |F_m(\hat{r}_{p})-F(\hat{r}_{p})| + |F(\hat{r}_{p})-p|,
\end{equation}
the right-hand side approaching zero, by Lemma \ref{borel} and Theorem \ref{thm:1}, as more data is considered in the validation set. This explains why early stopping was used throughout the numerical experiments, by automatically detecting $t_*$ and retrieving the corresponding $\hat{r}_{p}$.

\subsection{Details of models on UCI regression datasets}\label{sec:UCI_reg}
The baseline model has one relu-activated hidden layer with 50 units, except for \emph{Protein} and \emph{Song Year}, having 100 units. The ensembles are 20 repetitions of the experiments, except for \emph{Protein} and \emph{Song Year}, for which only 5 and 1 repetitions are considered, respectively. Hyperparameter optimization is done using the Hyperband tuner in Keras. For this, two protocols were tried and the best of the two, for each dataset, reported: 
\begin{enumerate}
 \item Optimization of learning rate, decay rate, weight-initialization variance, and dropout rate (using the Adam optimizer).
 \item Optimization of weight-initialization variance, weight decay, initial learning rate, and the decay rate of its subsequent exponential decay in a learning rate schedule (using the AdamW optimizer).
\end{enumerate}
In both protocols, the mean square error loss is used. However, in the second protocol, the CWC value \cite{reviewPI} is added to the metric used in the validation set for model selection in the hyperparameter optimization process. This value is calculated as $\tm{CWC}= \tm{NMPIW}(1+\gamma e^{-\eta(\tm{PICP-p})})$, where NMPIW is the MPIW normalized to the range of the target variable, $\gamma=\mathbbm{1}(\tm{PICP}<p)$ and $\eta$ is a constant taken as 0.1. The PICP and MPIW are calculated by PIM.

\subsection{PIM for classification}\label{sec:PIM_classif}

In classification problems, the target $Y$ is a \emph{discrete} random variable, but these can be framed so that the prediction error $E|X$ is still a continuous random variable accessible to PIM. The aim of this section is twofold:

\begin{itemize}
 \item Give additional demonstration that PIM can be more accurate for small sample sizes than ranking the order statistics, by using real-world datasets for binary classification.
 \item Demonstrate that using PIM to estimate confidence intervals for the accuracy of a classifier is more efficient than standard computations based on the Bootstrap method. 
\end{itemize}
To better illustrate the problem consider a sample $\{\hat{f}(x_i)\in[0,1]: i\in\mathcal{I}_2\}$ of predictions from a binary classifier. Denoting by $[\![y]\!]$ the rounding operation, the accuracy of the classifier is
\begin{equation}\label{acc}
 \tm{ACC} = \dfrac{1}{|\mathcal{I}_2|}\sum_{i\in\mathcal{I}_2}\mathbbm{1}([\![\hat{f}(x_i)]\!]=y_i).
\end{equation}
How do we estimate a confidence interval for the accuracy? The simplest way is by using the normal approximation to the binomial result: 
\begin{equation}\label{dacc_n}
 \delta_p (\tm{ACC})_{\mathcal{N}}=z_p \sqrt{\hat{\mu}_{\tm{ACC}}(1-\hat{\mu}_{\tm{ACC}})\,/\,|\mathcal{I}_2|},
\end{equation}
where $z_p$ is the z-score and $\hat{\mu}_{\tm{ACC}}$ is an estimation of the mean $\mu_{\tm{ACC}}$ of the distribution of accuracies. Clearly, using ACC in \eqref{acc} as a substitute for $\hat{\mu}_{\tm{ACC}}$ is rough; that is the standard way of getting confidence intervals for accuracy from one sample of predictions.

The standard estimation can be improved by resampling the proper training and validation sets, $\mathcal{I}_1$ and $\mathcal{I}_2$, respectively. That is, by the de Moivre-Laplace central limit theorem, all the so-obtained values of ACC are asymptotically normally distributed around $\mu_{\tm{ACC}}$, so their mean $\hat{\mu}_{\tm{ACC}}$ is an unbiased estimator of $\mu_{\tm{ACC}}$. When the resampling is done with repetition (a.k.a. the Bootstrap method), in order to allow for enough data, confidence intervals can be estimated by ranking the order statistics of all ACC, instead of using \eqref{dacc_n}.

One of the main observations in this section is that, provided that $\hat{f}$ is well calibrated, PIM can estimate confidence intervals $\delta_p (\tm{ACC})$ for the accuracy of a classifier, which are good estimates even when using a single sample of predictions. This is formalized and proved in section \ref{sec:classif_theory}. Since it would not need to resample the validation set, this makes PIM more efficient than the Bootstrap method. Experiments comparing PIM with the two methods above are described next.

\subsubsection{Benchmaring experiments}\label{sec:bin_classif}
For binary classification, the target $Y$ is either 0 (negative class) or 1 (positive class). Predictive models capture this by making predictions $\hat{f}(x)\in[0,1]$. According to the chosen threshold $\tau$ (here $\tau=1/2$), these are positive predictions if $\hat{f}(x)>\tau$, otherwise they are negative. Furthermore, by comparing with the corresponding ground truth, each prediction may be categorized as true negative, true positive, false negative, or false positive; denoted, respectively, by the index $l\in\lbrace\tm{TN, TP, FN, FP}\rbrace$.

Relevant metrics of model performance are derived from the classification rates $R_k$. For instance, the accuracy can be written as $\tm{ACC}=p_{\tm{N}}R_{\tm{TN}}+p_{\tm{P}}R_{\tm{TP}}$, where $p_{\tm{N}}$ ($p_{\tm{P}}$), are the negative (positive) class proportions in the validation set. Confidence intervals for accuracy are then obtained as
\begin{equation}\label{dacc}
 \delta_p (\tm{ACC})=p_{\tm{N}}\,\delta_p(R_{\tm{TN}})+p_{\tm{P}}\,\delta_p(R_{\tm{TP}}),
\end{equation}
in terms of confidence intervals for the classification rates $\delta_p(R_l)$. The latter are estimated by PIM after estimating quantiles from the error samples $\varepsilon_l(x)=y_l-\hat{f}(x)$ observed in the validation set, where $y_l$ is the ground truth label if $l$ refers to a \emph{true} prediction, otherwise $y_l=\tau$. For this, four neurons $s_l$ are trained in parallel until the optimal weights $\hat{r}_p^l$ estimate the desired quantiles.

\setlength{\tabcolsep}{3.2pt}
\begin{table*}[t]
\caption{Comparison of Bootstrap vs PIM estimation of $95\%$ confidence interval widths for the accuracy of classification algorithms on UCI data sets: median $\pm$ median absolute deviation over the ensemble. Binomial estimates according to \eqref{dacc_n} are in the last column $2\delta_p$(ACC)$_{\mathcal{N}}$.}
\label{PIMvsBS}
\begin{center}
\begin{small}
\begin{sc}
\begin{tabular}{cccccccc}
\toprule
\multirow{2}{*}{Dataset} &
\multirow{2}{*}{Size} &
\multirow{2}{*}{$p_{\tm{N}}$} &
\multirow{2}{*}{$\hat{\mu}_{\tm{ACC}}$} &
\multirow{2}{*}{$\Delta_{\tm{calib}}$} &
\multicolumn{2}{c}{2$\delta_p$(ACC)} &
\multirow{2}{*}{2$\delta_p$(ACC)$_{\mathcal{N}}$} \\
& & & & & BS-Ens & PIM-Ens & \\
\midrule
Sonar & $208$& 0.47& 0.87& $0.00$&$0.31 \pm0.07$ & \cellcolor{llgray}$0.16 \pm0.07$&  \cellcolor{llgray}0.23  \\
Heart Disease & $303$& 0.46& 0.86& $0.00$& \cellcolor{llgray}$0.35 \pm0.05$ & \cellcolor{llgray}$0.25 \pm0.09$&  \cellcolor{llgray}0.17\\
Ionosphere & $351$& 0.36&0.88& $0.00$& \cellcolor{llgray}$0.19\pm 0.04$ & \cellcolor{llgray}$0.19 \pm0.05$&  \cellcolor{llgray}0.15\\
Musk & $476$ & 0.43&0.91& $0.00$&\cellcolor{llgray}$0.21\pm 0.05$ & \cellcolor{llgray}$0.14 \pm 0.05$& \cellcolor{llgray}0.11\\
Breast Cancer &$569$ &0.34 &0.96& $0.00$& \cellcolor{llgray}$0.07\pm 0.02$ & \cellcolor{llgray}$0.07\pm 0.04$& \cellcolor{llgray}0.06\\
Pima Diabetes & $768$ & 0.35 & 0.75 & $0.00$& $0.07\pm 0.01$ & \cellcolor{llgray}$0.20\pm 0.10$& \cellcolor{llgray}0.14\\  
Spambase & $4,601$ & 0.39&0.94 & 3.47&$0.10\pm 0.02$&$0.06\pm 0.01$  & 0.03\\ 
Phoneme & $5,404$ & 0.29& 0.81 &  2.07 & $0.00\pm 0.00$ & $0.11\pm 0.01$ & 0.05\\ 
Mammography & $11,183$ & 0.02& 0.99 & 0.00&  $0.00\pm 0.00$ &  \cellcolor{llgray}$0.08\pm 0.07$&  \cellcolor{llgray}0.01\\
\bottomrule
\end{tabular}
\end{sc}
\end{small}
\end{center}
\end{table*}

As stated in Theorem \ref{thm:2}, the success of PIM depends on being fed by the outputs of a well-calibrated classifier \cite{guo_calibration,calib_reg,Krishnan}, so that these outputs approximate true probabilities. In these cases, PIM will give high-quality uncertainty estimates for the classification rates and derived quantities, given enough data. For the experiments that follow, a lower bound\cite{Kumar} $\Delta_{\tm{calib}}$ for the calibration error of the uncalibrated models is calculated.

To run the experiments, a classification model $\hat{f}$ with one hidden layer (having as many units as data features) and a sigmoid-activated output, is trained on $80\%$ of the data and evaluated on the remaining $20\%$, for nine UCI datasets. To quantify the spread of the uncertainty estimations, the train-test partition is randomly shuffled multiple times, forming an ensemble of 30 experiments. 

For each experiment, the distribution of ACC is sampled $B=20$ times by training $\hat{f}$ on $B$ transformations of the original training set, obtained by randomly sampling from it with replacement. Confidence intervals from the $B$ accuracies obtained in the validation ($=$test) sets are then computed by ranking their order statistics. The results from this Bootstrap (BS) method are compared with PIM and the binomial estimate in Table \ref{PIMvsBS}. As observed, PIM obtains confidence interval widths which are often narrower than BS (which ranks the order statistics of ACC) and therefore of higher quality, specially for small sample sizes. Yet, these are accurate enough\footnote{The effect of calibration on the spread of uncertainty estimations is investigated in section \ref{sec:classif_exp}} to overlap with the binomial estimate (except those few cases where $\hat{f}$ is miscalibrated). Benchmarking PIM against BS is important since the latter is a popular method \cite{ZhangThesis}, used for addressing uncertainties in many real-world applications, including time series forecasting \cite{BaggingTS}.

\subsubsection{Theoretical details}\label{sec:classif_theory}
The main theoretical result of section \ref{sec:PIM_classif}, namely Theorem \ref{thm:2}, is proved in this section. It will be understood that $\hat{Y}$ is a random variable taking the continuous values $\hat{y}=\hat{f}(x)\in[0,1]$. Using the classification threshold $\tau$ (by default $\tau=0.5$), $\hat{Y}$ is compared to the ground truth binary variable $Y$, taking values $y\in \mathcal{B}=\lbrace0,1\rbrace$, by applying the \emph{rounding} operation, $\rn{\hat{Y}}_\tau=\mathbbm{1}(\hat{Y}>\tau)$, which projects the values to the binary set $\mathcal{B}$.  

\textbf{Definition 1.} The index $l$ has been defined as a label for the set $\mathcal{K}=\lbrace \tm{TN}, \tm{TP}, \tm{FN}, \tm{FP}\rbrace$. This index can be written as the cartesian product $l=s_l\times v_l=\lbrace(s_l,v_l): s_l\in\lbrace\tm{T,\,F}\rbrace\;\tm{and}\;v_l\in\lbrace\tm{N,\,P}\rbrace\rbrace$. A mapping to the binary set $\mathcal{B}$ is introduced by putting a bar above the respective symbols according to:
\begin{equation}
 \bar{s}_l=\bar{v}_l\in\mathcal{B}\;\;\;\tm{for}\;\;\;s_l=\tm{T},\;\;\tm{and}\;\;\bar{s}_l=1-\bar{v}_l\in\mathcal{B}\;\;\;\tm{for}\;\;\;s_l=\tm{F}.
\end{equation}
In this way, a value of $l$ can be uniquely mapped to a pair of binary symbols $(\bar{s}_l,\bar{v}_l)$, taking on values in $\mathcal{B}$. Just as the rounding operator $\rn{\cdot}_\tau\equiv\rn{\cdot}_v$ projects to the set $\{\tm{N, P}\}$ of possibles values of $v_l$, we define $\rn{\cdot}_s$ as an operator projecting to the set $\{\tm{T, F}\}$ of possible values of $s_l$. Unless otherwise stated, the subscript $\tau$ may be dropped, for simplicity, from the rounding operator $\rn{\cdot}_\tau$.

\textbf{Definition 2.} The symbol $|a|$ is used to count the total number of elements in the set labeled by $a$; for instance, the quantity $|v_l|\in\lbrace|\tm{N}|, |\tm{P}|\rbrace$ takes on values denoting the total number of negatives or positives in the validation set. Using this notation, the classification rates $R_l=L_l/V_l$ can be written as as quotient of random variables $L_l$ and $V_l$ taking on the values, $|l|$ and $|v_l|$, respectively. 

\ \vspace{0.1cm}

\begin{lemma}\label{normFRk}
 The classification rates $R_l=L_l/V_l$ are asymptotically normal distributed with mean $\Pr(\rn{\hat{Y}}=\bar{s}_l\,|\,Y=\bar{v}_l)$ and variance  $|v_l|^{-1}\Pr(\rn{\hat{Y}}=\bar{s}_l\,|\,Y=\bar{v}_l)\,\Pr(\rn{\hat{Y}}=1-\bar{s}_l\,|\,Y=\bar{v}_l)$ in the limit when $|v_l|\rightarrow\infty$.
\end{lemma}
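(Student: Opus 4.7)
The plan is to recognize $L_l$ (conditionally on $V_l=|v_l|$) as a sum of i.i.d.\ Bernoulli trials and then apply the de Moivre--Laplace central limit theorem. First I would fix a realization of the validation set and condition on the event $\{V_l=|v_l|\}$, i.e.\ on the subset of $|v_l|$ samples whose ground-truth label equals $\bar{v}_l$. For each such sample $x_i$, the prediction $\hat{f}(x_i)$ (and hence $\rn{\hat{f}(x_i)}_\tau$) is drawn independently from the conditional distribution of $\hat{Y}\mid Y=\bar{v}_l$. Therefore the indicator
\begin{equation*}
Z_i \;=\; \mathbbm{1}\!\bigl(\rn{\hat{f}(x_i)}_\tau=\bar{s}_l\bigr)
\end{equation*}
is Bernoulli with success probability $\pi := \Pr(\rn{\hat{Y}}=\bar{s}_l\mid Y=\bar{v}_l)$, and the $Z_i$ are i.i.d.\ across the $|v_l|$ samples of class $\bar{v}_l$.

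Next I would identify $L_l$ with $\sum_{i=1}^{|v_l|} Z_i$: by Definitions~1--2, $L_l$ counts the samples whose true label is $\bar{v}_l$ and whose rounded prediction equals $\bar{s}_l$, which is exactly the correctness/incorrectness partition encoded by the pair $(\bar{s}_l,\bar{v}_l)$. Hence $L_l\mid V_l \sim \mathrm{Binomial}(|v_l|,\pi)$, so $\mathbb{E}[L_l/V_l\mid V_l]=\pi$ and $\mathrm{Var}(L_l/V_l\mid V_l)=\pi(1-\pi)/|v_l|$. The de Moivre--Laplace theorem then yields, as $|v_l|\to\infty$,
\begin{equation*}
\sqrt{|v_l|}\,\bigl(R_l-\pi\bigr)\;\xrightarrow{d}\;\mathcal{N}\!\bigl(0,\,\pi(1-\pi)\bigr),
\end{equation*}
which is precisely the claimed asymptotic normality once one notices that $1-\pi=\Pr(\rn{\hat{Y}}=1-\bar{s}_l\mid Y=\bar{v}_l)$, since $\rn{\hat{Y}}_\tau$ is binary and its two outcomes exhaust the conditional probability mass.

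The only mildly delicate point is the bookkeeping around Definition~1 (checking that the mapping $l\mapsto(\bar{s}_l,\bar{v}_l)$ does capture the TN/TP/FN/FP counts correctly, so that $L_l$ really is the binomial count described above) and justifying that conditioning on $V_l$ is harmless for the marginal statement: since the conclusion holds for every realization of $V_l=|v_l|\to\infty$, and $V_l\to\infty$ almost surely whenever the class proportion $\Pr(Y=\bar{v}_l)>0$, the unconditional limit follows by a standard Slutsky-type argument. No other technical obstacle is expected; the heart of the proof is a one-line application of the classical CLT to the Bernoulli indicators $Z_i$.
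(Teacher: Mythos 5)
Your proposal is correct and matches the paper's proof in essence: both recognize that $L_l$ conditioned on $V_l=|v_l|$ is Binomial (since $\rn{\hat{Y}}$ is binary) and then invoke the de Moivre--Laplace central limit theorem. The paper states this in a single sentence; your version fills in the useful bookkeeping (the Bernoulli indicators, the identification $1-\pi=\Pr(\rn{\hat{Y}}=1-\bar{s}_l\mid Y=\bar{v}_l)$, and the remark that the conditional-to-unconditional passage needs $V_l\to\infty$ a.s.\ plus a Slutsky-type argument), which the paper leaves implicit.
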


\begin{proof}
 Since $\rn{\hat{Y}}$ is a binary random variable, $L_l$ is Binomially distributed, so the result immediately follows after applying the de Moivre-Laplace central limit theorem.
\end{proof}
\begin{corollary}
 The accuracy of a binary classification algorithm is asymptotically normal distributed with mean $\mu_{\tm{ACC}} = p_{\tm{N}}\,\mu_{R_{\tm{TN}}}+p_{\tm{P}}\,\mu_{R_{\tm{TP}}}$ and variance $\sigma_{\tm{ACC}}^{2}=p_{\tm{N}}^{2}\,\sigma_{R_{\tm{TN}}}^{2}+p_{\tm{P}}^{2}\,\sigma_{R_{\tm{TP}}}^{2}$, with $\sigma_{\tm{ACC}}^{2}\rightarrow\mu_{\tm{ACC}}(1-\mu_{\tm{ACC}})/m$ as $m=|\tm{P}|+|\tm{N}|\rightarrow\infty$.
\end{corollary}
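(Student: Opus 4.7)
The plan is to write $\tm{ACC}$ as a linear combination of the two classification rates covered by Lemma \ref{normFRk}, invoke independence between them, and then reconcile the resulting variance with the claimed asymptotic form $\mu_{\tm{ACC}}(1-\mu_{\tm{ACC}})/m$ via the de Moivre--Laplace CLT applied directly to the per-sample correctness indicator. Concretely, I would start from the identity
\begin{equation*}
\tm{ACC} \;=\; \frac{L_{\tm{TN}}+L_{\tm{TP}}}{m} \;=\; \frac{|\tm{N}|}{m}\,R_{\tm{TN}} + \frac{|\tm{P}|}{m}\,R_{\tm{TP}} \;=\; p_{\tm{N}} R_{\tm{TN}} + p_{\tm{P}} R_{\tm{TP}}.
\end{equation*}

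Next I would observe that, conditional on the class counts $(|\tm{N}|,|\tm{P}|)$, the summands $L_{\tm{TN}}$ and $L_{\tm{TP}}$ are computed from the two disjoint class-labelled subsamples of the validation set, so $R_{\tm{TN}}$ and $R_{\tm{TP}}$ are independent. By Lemma \ref{normFRk}, each is asymptotically normal with the mean and variance stated there. Since a linear combination of two independent Gaussians with fixed coefficients is again Gaussian, $\tm{ACC}$ inherits asymptotic normality with mean $\mu_{\tm{ACC}}=p_{\tm{N}}\mu_{R_{\tm{TN}}}+p_{\tm{P}}\mu_{R_{\tm{TP}}}$ and variance $\sigma_{\tm{ACC}}^2 = p_{\tm{N}}^2\sigma_{R_{\tm{TN}}}^2 + p_{\tm{P}}^2\sigma_{R_{\tm{TP}}}^2$. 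This already delivers the first two assertions of the corollary.

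For the asymptotic limit of $\sigma_{\tm{ACC}}^2$, I would take a second, complementary viewpoint: treat the validation samples $(X_i,Y_i)$ as i.i.d.\ draws from the joint distribution and define $Z_i=\mathbbm{1}(\rn{\hat{f}(X_i)}=Y_i)$, which is Bernoulli$(\mu_{\tm{ACC}})$ with $\tm{ACC}=m^{-1}\sum_{i=1}^{m}Z_i$. Direct application of the de Moivre--Laplace CLT to $\sum_i Z_i$ yields asymptotic normality of $\tm{ACC}$ with variance $\mu_{\tm{ACC}}(1-\mu_{\tm{ACC}})/m$. This is the target expression, so convergence follows by identifying the two CLT limits of the same random variable.

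The main obstacle is that the conditional-on-$(|\tm{N}|,|\tm{P}|)$ variance $p_{\tm{N}}^2\sigma_{R_{\tm{TN}}}^2+p_{\tm{P}}^2\sigma_{R_{\tm{TP}}}^2$ and the unconditional variance $\mu_{\tm{ACC}}(1-\mu_{\tm{ACC}})/m$ are \emph{not} algebraically equal for fixed $m$; they differ by exactly the amount $p_{\tm{N}}p_{\tm{P}}(\mu_{R_{\tm{TN}}}-\mu_{R_{\tm{TP}}})^2/m$. I would close this gap by invoking the law of total variance, writing $\sigma_{\tm{ACC}}^2 = \mathbb{E}[\tm{Var}(\tm{ACC}\mid |\tm{N}|,|\tm{P}|)] + \tm{Var}(\mathbb{E}[\tm{ACC}\mid |\tm{N}|,|\tm{P}|])$, noting that $|\tm{N}|\sim\tm{Binomial}(m,p_{\tm{N}})$ contributes exactly the missing $O(1/m)$ correction $p_{\tm{N}}p_{\tm{P}}(\mu_{R_{\tm{TN}}}-\mu_{R_{\tm{TP}}})^2/m$ through the second term, and then checking by direct expansion that the two expressions coincide asymptotically. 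This clarifies in what sense the ``$\rightarrow$'' in the statement should be read, and completes the corollary.
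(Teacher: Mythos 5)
Your first two paragraphs reproduce essentially the paper's own argument: write $\tm{ACC}=p_{\tm{N}}R_{\tm{TN}}+p_{\tm{P}}R_{\tm{TP}}$, invoke Lemma~\ref{normFRk} and independence of $R_{\tm{TN}}$ and $R_{\tm{TP}}$ (they live on disjoint subsamples), and conclude asymptotic normality with the stated mean and variance.

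Where you diverge --- and improve on the paper --- is the final assertion. The paper proves $\sigma_{\tm{ACC}}^{2}\rightarrow\mu_{\tm{ACC}}(1-\mu_{\tm{ACC}})/m$ by simply observing that the two expressions ``differ by a quantity of $O(1/m)$.'' That observation is correct but vacuous: both quantities are themselves $O(1/m)$, so it does not establish asymptotic equivalence in any informative sense. You sharpen this in two ways. First, you compute the discrepancy exactly: conditional on the class counts, $p_{\tm{N}}^2\sigma_{R_{\tm{TN}}}^2+p_{\tm{P}}^2\sigma_{R_{\tm{TP}}}^2$ falls short of $\mu_{\tm{ACC}}(1-\mu_{\tm{ACC}})/m$ by exactly $p_{\tm{N}}p_{\tm{P}}(\mu_{R_{\tm{TN}}}-\mu_{R_{\tm{TP}}})^2/m$, a term of the \emph{same} order as $\sigma_{\tm{ACC}}^2$ itself, so the two are not asymptotically equivalent as a ratio. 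Second, you reconcile the two viewpoints by the law of total variance: the conditional decomposition suppresses the randomness of $|\tm{N}|\sim\tm{Binomial}(m,p_{\tm{N}})$, and the term $\tm{Var}(\mathbb{E}[\tm{ACC}\mid|\tm{N}|,|\tm{P}|])$ supplies exactly the missing $p_{\tm{N}}p_{\tm{P}}(\mu_{R_{\tm{TN}}}-\mu_{R_{\tm{TP}}})^2/m$, recovering the direct Bernoulli-CLT expression. This makes precise the sense in which the ``$\rightarrow$'' should be read (unconditional variance) and is a cleaner and more rigorous treatment than what the paper offers. The trade-off is that your argument requires explicitly distinguishing conditional from unconditional quantities, while the paper's terser statement obscures that distinction.
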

\begin{proof}
 This follows by writing the accuracy as a weighted sum $\tm{ACC}=p_{N}\,R_{\tm{TN}}+p_{P}\,R_{\tm{TP}}$ of independent and asymptotically normal random variables. As a consequence, the accuracy is also asymptotically normal distributed with mean $\mu_{\tm{ACC}} = p_{\tm{N}}\,\mu_{R_{\tm{TN}}}+p_{\tm{P}}\,\mu_{R_{\tm{TP}}}$. The independence of $R_{\tm{TN}}$ and $R_{\tm{TP}}$ (they refer to mutually exclusive subspaces) then implies that the variance is $\sigma_{\tm{ACC}}^{2}=p_{\tm{N}}^2\,\sigma_{R_{\tm{TN}}}^{2}+p_{\tm{P}}^2\,\sigma_{R_{\tm{TP}}}^{2}$. From Lemma \ref{normFRk}, $\sigma_{R_{\tm{TN}}}^{2}=\mu_{R_{\tm{TN}}}(1-\mu_{R_{\tm{TN}}})/|\tm{N}|$ and $\sigma_{R_{\tm{TP}}}^{2}=\mu_{R_{\tm{TP}}}(1-\mu_{R_{\tm{TP}}})/|\tm{P}|$. Therefore, $\sigma_{\tm{ACC}}^{2}$ differs from $\mu_{\tm{ACC}}(1-\mu_{\tm{ACC}})/m$ by a quantity of $O(1/m)$, with $m=|\tm{P}|+|\tm{N}|$ and $p_{\tm{N}}=|\tm{N}|/m$, $p_{\tm{P}}=|\tm{P}|/m$. This result was used in \eqref{dacc_n} to express the binomial confidence interval radius as $\delta(\tm{ACC})_{\mathcal{N}}=z_{p}\,\sigma_{\tm{ACC}}$.
\end{proof}

\begin{theorem}\label{thm:2}
If the output $\hat{y}\in[0,1]$ of a binary classifier is perfectly calibrated, \cite{guo_calibration} i.e. $\Pr(\rn{\hat{Y}} = Y\,|\, \hat{Y} = q) = q\;$ for all $q\in[0,1]$, then the quantiles $\hat{r}_{p}^{l}$ directly estimated by PIM from the validation errors $\varepsilon_l = y_l-\hat{y}_l$ are asymptotically consistent with the $p$-quantiles of the asymptotically normal distribution of the classification rates $R_l$.
\end{theorem}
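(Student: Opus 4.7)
The plan is to chain together Theorem \ref{thm:1}, the calibration hypothesis, and Lemma \ref{normFRk}. First, I would restrict to the subsample of validation points in a fixed category $l \in \mathcal{K}$, where the errors $\varepsilon_l = y_l - \hat{y}_l$ are i.i.d.\ draws from a continuous conditional distribution $F_l$ supported on a bounded subinterval of $[-\tfrac12, \tfrac12]$. Because $\hat{Y}$ is continuous on $[0,1]$ and $y_l$ is a fixed constant (the true label for true predictions, or $\tau$ for false ones), $F_l$ is continuous and strictly increasing on its support, so Theorem \ref{thm:1} applies and yields $\hat{r}_p^l \to r_p^l$ almost surely, with $r_p^l$ the unique $p$-quantile of $F_l$.

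Next, I would exploit the calibration identity $\Pr(\rn{\hat{Y}} = Y \mid \hat{Y} = q) = q$ together with Bayes' rule to derive an explicit form for the conditional density of $\hat{Y}$ given membership in category $l$, and hence for $F_l$. The purpose of this calculation is to connect the moments of $F_l$ with the Bernoulli parameters appearing in Lemma \ref{normFRk}: I expect calibration to force the first moment of $F_l$ to match $\mu_{R_l} = \Pr(\rn{\hat{Y}} = \bar s_l \mid Y = \bar v_l)$, and the second central moment to reproduce the Bernoulli variance $\mu_{R_l}(1 - \mu_{R_l})$ that drives the asymptotic spread of $R_l$. This is where the calibration hypothesis earns its keep; without it, the per-sample error moments and the per-class Bernoulli moments would not line up.

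Finally, I would identify $r_p^l$ with the $p$-quantile of $\mathcal{N}(\mu_{R_l}, \sigma_{R_l}^2)$ delivered by Lemma \ref{normFRk}. The hard step, and the main obstacle, is precisely this last identification: the per-sample error distribution $F_l$ carries $O(1)$ spread while the rate $R_l$ concentrates at rate $|v_l|^{-1/2}$, so the two distributions cannot be matched directly. My plan is to invoke a de Moivre--Laplace CLT argument, in the spirit of Lemma \ref{normFRk}, that promotes the moment identities established in Step 2 to a quantile identity for the \emph{averaged} rate statistic, once the implicit scaling built into the definition of PIM's confidence-interval radius has been accounted for. Once the per-category identification is in place, linearity in \eqref{dacc} extends the result to the accuracy $\tm{ACC}$ and to the normal-approximation confidence interval \eqref{dacc_n}.
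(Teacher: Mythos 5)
The proposal diverges from the paper's argument at the crucial juncture and, by your own admission, never crosses it. You correctly identify the central obstacle: PIM's weight $\hat{r}_p^l$ converges (via Theorem~\ref{thm:1}) to the $p$-quantile of the per-sample error law $F_l$, which has $O(1)$ spread, whereas the sampling distribution of the rate $R_l$ from Lemma~\ref{normFRk} has spread $\sigma_{R_l} = O(|v_l|^{-1/2})$, so no direct matching of the two distributions is possible. Your Step~2 moment-matching plan would establish that the mean and variance of $F_l$ equal $\mu_{R_l}$ and $\mu_{R_l}(1-\mu_{R_l})$, but those are the \emph{unnormalized} Bernoulli moments, not the moments of the concentrated normal $\mathcal{N}(\mu_{R_l}, \sigma_{R_l}^2)$; and the phrase ``the implicit scaling built into the definition of PIM's confidence-interval radius'' does not correspond to anything in the construction of Section~\ref{sec:PIM}. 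PIM estimates the empirical quantile of $|\varepsilon_l|$ with no $1/\sqrt{|v_l|}$ rescaling anywhere in the loss or the architecture, so there is nothing to ``account for.'' As written, the chain breaks exactly where you flag it.

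The paper itself does not go via moment matching at all. Its proof first uses the calibration identity to derive \eqref{eq:qFh}, $F_{\hat{Y}}(\tau) = 1 - \langle q_\tau\rangle$, showing that $\tau$ is itself a specific quantile of the prediction distribution $F_{\hat{Y}}$; this is what justifies measuring the error relative to $\tau$ for false predictions and relative to $y$ for true ones (a point your proposal does not motivate). It then defines $F_l$ as the empirical CDF of $|\varepsilon_l^i|$ (note the absolute value, which your proposal omits) and invokes the cited property of calibrated classifiers that ``accuracy is locally distributed as the average confidence'' to assert that the quantiles of $R_l$ coincide with the corresponding quantiles of $\hat{Y}_l$, and only then applies Lemma~\ref{normFRk} to say the $R_l$ are asymptotically normal. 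Whether that cited assertion genuinely resolves the $O(1)$-versus-$O(|v_l|^{-1/2})$ tension you raise is a fair question to put to the paper; but the route you propose, as a proof, is missing its keystone and the keystone you gesture at does not exist in PIM's definition.
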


Note that perfect calibration is impossible in all practical settings. However, there are empirical approximations (calibration methods), some of them used in section \ref{sec:classif_exp} below, which capture the essence of perfect calibration. A ``well-calibrated'' $\hat{f}$ is understood here as model that, by designed, is calibrated or that has been calibrated properly after applying a calibration method. Before proceeding with the proof of Theorem \ref{thm:2}, it helps to first visualize the meaning of the statement. In Figure \ref{adult_scaling}, a plot of the empirical distribution of $\hat{Y}\,|\,Y$ is shown for a neural network with one hidden layer predicting on the test set of the \emph{Adult} dataset in the UCI repository. It is noticed that, after applying a calibration method, the false predictions tend to cluster around the threshold $\tau=0.5$, following a kind of Gaussian-like envelope. For true predictions, these cluster around the ground truth but displaying long tails depending on the calibration method. PIM is applied to find quantiles for the errors around the targets (for false predictions, the target is $\tau$). The statement is then that, under certain conditions, these quantiles coincide with those of the distribution of the classification rates $R_l$ of the corresponding $\rn{\hat{Y}}$. 

\begin{figure*}[t]
\vskip 0.2in
\begin{center}
\centerline{\includegraphics[width=\columnwidth]{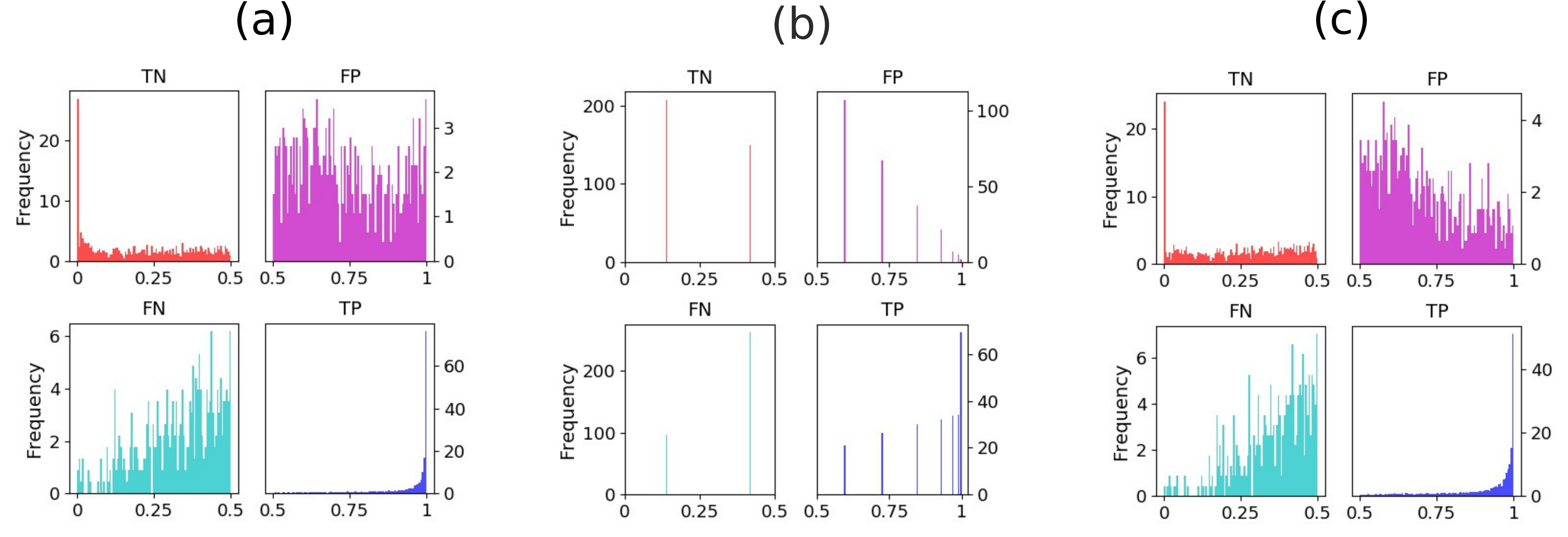}}
\caption{Confusion matrix shown as a normalized \emph{distribution} of probability scores $\bm{\hat{y}}$ thrown by a classifier on the default test set of the \emph{Adult} dataset: (a) original distributions (b) distributions after applying the Scaling-Binning calibrator \cite{Kumar} (c) distributions after applying the Platt calibrator.}
\label{adult_scaling}
\end{center}
\vskip -0.2in
\end{figure*}

\begin{proof}
The proof proceeds by first showing that, if a classifier is perfectly calibrated, the quantiles of the predicted targets $\hat{Y}$ are intimately connected with the expected value of the predictions. This is then used to pivot the errors $\varepsilon_l=y_l-\hat{y}_l$ with respect to the threshold $\tau$ when $s_l=\tm{F}$ (i.e. $y_l=\tau$) and with respect to the ground truth when $s_l=\tm{T}$ (i.e. $y_l=y$). 

Using the notation of Definition 1, we seek an identity which links positive and negative predictions with true and false predictions. This is
\begin{equation}
 \rn{\hat{Y}}_\tau := \mathbbm{1}(\hat{Y}> \tau)=\;\;\mathbbm{1}(Y=1)\,\mathbbm{1}(\rn{\hat{Y}}_s=\tm{T})+\mathbbm{1}(Y=0)\,\mathbbm{1}(\rn{\hat{Y}}_s=\tm{F}),
\end{equation}
which is valid for any threshold $\tau\in(0,1)$. Taking expectation value on both sides,
\begin{equation}\label{eq:phY}
 \begin{split}
  \Pr(\hat{Y}>\tau) &= \Pr(Y=1, \rn{\hat{Y}}_s=\tm{T})+\Pr(Y=0, \rn{\hat{Y}}_s=\tm{F})\\
  &= \Pr(\rn{\hat{Y}}_\tau=Y)
  = \int_0^1 \Pr(\rn{\hat{Y}}_\tau=Y\,|\,\hat{Y}=q_\tau(z))\,\rho_{\hat{Y}}(z)dz,
 \end{split}
\end{equation}
where $\rho_{\hat{Y}}$ is the probability density function of $\hat{Y}$ associated to the cumulative distribution function $F_{\hat{Y}}(\hat{y})=\int_0^{\hat{y}}\rho_{\hat{Y}}(\hat{y}')\,d\hat{y}'$, and $q_\tau$ belongs to a family of smooth functions $q_\tau:[0,1]\rightarrow[0,1]$ labeled by $\tau$. If the classifier is perfectly calibrated,
\begin{equation}
 \Pr(\rn{\hat{Y}}_\tau=Y\,|\,\hat{Y}=q_\tau(z))=q_\tau(z),\;\;\;\;\;\forall q_\tau(z)\in[0,1].
\end{equation}
Replacing this in \eqref{eq:phY} and denoting by $\langle q_\tau\rangle=\mathbb{E}_{\hat{Y}}(q_\tau)$ the expectation value of $q_\tau$, we obtain
\begin{equation}\label{eq:qFh}
 F_{\hat{Y}}(\tau)=\Pr(\hat{Y}\le\tau)=1-\langle q_\tau\rangle.
\end{equation}
Clearly, $\langle q_\tau\rangle\in(0,1)$, so \eqref{eq:qFh} states that $\tau$ coincides with the $(1-\langle q_\tau\rangle)$-quantile of $F_{\hat{Y}}$. For well-balanced datasets $p_\tm{N}\simeq p_\tm{P}\simeq1/2$, the classifier will presumably learn to predict aproximately the same amount of positive and negative predictions, so $\tau=1/2$ will coincide with the median ($=\,$mean) of $\hat{Y}$, which is a special case of \eqref{eq:qFh} for $\tau=\langle q_\tau\rangle=1/2$

As in section \ref{sec:condq}, we are interested in the errors commited by the predictive model. In that section, these were measured with respect to the median of $\hat{Y}$ (expected to coincide with $Y$). However, in the binary classification context, the median of $\hat{Y}$ is not necessarily close to $Y$, as Fig. \ref{adult_scaling}(c) suggests. The result implied by \eqref{eq:qFh} then suggests that the errors $\varepsilon_l=y_l-\hat{y}_l$ committed by the predictive model be measured relative to the ground truth $Y$ for $s_l=\tm{T}$ and relative to $\tau$ for $s_l=\tm{F}$. This leads to the quantile estimation problem for the error variable $E_l$ as finding the $\hat{r}_p^l$ such that the empirical error distribution functions evaluate to the confidence level $p$
\begin{equation}
 F_l(\hat{r}_p^l) := \dfrac{1}{|v_l|}\sum_{i:\, \rn{\hat{y}_i}=\bar{s}_l}\mathbbm{1}(|\varepsilon_l^i|\le \hat{r}_p^l) = p.
\end{equation}
Such quantiles are estimated by PIM as an alternative to calculating the quantiles of $\hat{Y}$ directly (similar to what was done in section \ref{sec:condq}). It is known that for a perfectly calibrated classifier, the accuracy is locally distributed as the average confidence\cite{guo_calibration} (here $\hat{Y}$). Since the classification rate $R_l$ is the accuracy in the subspace indexed by $l$, this shows that the quantiles of $R_l$ coincide with the corresponding quantiles of $\hat{Y}_l$. Furthermore, by Lemma \ref{normFRk}, the $R_l$ are asymptotically normally distributed.

\end{proof}

\subsubsection{Uncertainty propagation}
PIM uses four neurons $u_l$ to measure confidence intervals $\hat{r}_p^l:=\delta_p (R_l)$ for the classification rates $R_l$. Given the nature of classification $\delta_p (R_l)<1$ with probability one. Knowing the value of $p$ from the context, we can omit it from the $\delta$ subscript. It is then convenient to think of $\delta_p (R_l):=\delta R_l$ as a uncertainty that can be propagated to quantities dependent on $\{R_l\}$ using Taylor's theorem. This was done in \eqref{dacc} to go from $\tm{ACC}=p_{\tm{N}}R_{\tm{TN}}+p_{\tm{P}}R_{\tm{TP}}$ to $\delta\tm{ACC}=p_{\tm{N}}\delta R_{\tm{TN}}+p_{\tm{P}}\delta R_{\tm{TP}}$ by $\delta$-differentiating both sides. The result is straightforward in this case because the relationship connecting the classification rates with the quantity of interest is linear. No error in the Taylor expansion is committed in this case. In this section, we would like to consider non-linear relationships and use uncertainty propagation techniques --- as in the natural sciences --- to find the associated uncertainties. 

With $\sim$ denoting the asymptotic value around which the classification rates cluster, it has been shown in Lemma \ref{normFRk} that
\begin{equation}
\begin{split}
 R_{\tm{TP}}&=\dfrac{|\tm{TP}|}{|\tm{TP}|+|\tm{FN}|}\sim\Pr(\rn{\hat{Y}}=1\,|\,Y=1),\\
 R_{\tm{FN}}&=1-R_{\tm{TP}} \,\,\sim \Pr(\rn{\hat{Y}}=0\,|\,Y=1),\\
 R_{\tm{FP}}&=\dfrac{|\tm{FP}|}{|\tm{FP}|+|\tm{TN}|}\sim \Pr(\rn{\hat{Y}}=1\,|\,Y=0),\\
 R_{\tm{TN}}&=1-R_{\tm{FP}} \,\,\sim \Pr(\rn{\hat{Y}}=0\,|\,Y=0).
 \end{split}
 \label{crates}	
\end{equation}
 It is of interest to estimate the uncertainty of other important rates, namely, positive predictive value ($R_{\tm{TP}}^{*}$, a.k.a. precision), the false discovery rate ($R_{\tm{FN}}^{*}$), the negative predictive value ($R_{\tm{TN}}^{*}$), and the false omission rate ($R_{\tm{FP}}^{*}$). These are obtained after interchanging the roles of predictions and ground truths. By symmetry,
\begin{equation}
\begin{split}
 R_{\tm{TP}}^{*}&=\dfrac{|\tm{TP}|}{|\tm{TP}|+|\tm{FP}|}\sim\Pr(Y=1\,|\,\rn{\hat{Y}}=1),\\
 R_{\tm{FN}}^{*}&=1-R_{\tm{TP}}^{*} \,\,\sim\Pr(Y=0\,|\,\rn{\hat{Y}}=1),\\
 R_{\tm{TN}}^{*}&=\dfrac{|\tm{TN}|}{|\tm{TN}|+|\tm{FN}|}\sim\Pr(Y=0\,|\,\rn{\hat{Y}}=0),\\
 R_{\tm{FP}}^{*}&=1-R_{\tm{TN}}^{*} \,\,\sim\Pr(Y=1\,|\,\rn{\hat{Y}}=0).
 \end{split}
 \label{posteriors}
\end{equation}
Taking the $\delta R_l$ learned by PIM as independent variables (the neurons $u_l$ are independent), it is assumed that the uncertainties of any smooth function $g$ of $\lbrace R_{l}\rbrace$ can be approximated by Taylor's expansion:
\begin{equation}
 \delta g(\lbrace R_{l}\rbrace) = \sum_{q}\Bigl|\dfrac{\partial g}{\partial R_q}\Bigr|\,\delta R_q + \tfrac{1}{2}\sum_{q\in\tm{F}}\Bigl|\dfrac{\partial^2 g}{\partial R_q^2}\Bigr|\,(\delta R_q)^2+\cdots.
 \label{deltaf}
\end{equation}
When an \emph{approximation} of $\delta g$ is enough, only second order corrections are taken into account for false predictions, assuming they are more uncertain due to the (good enough) classification algorithm commiting them less frequently. Now, by using Bayes' theorem, the asymptotic values in \eqref{crates} and \eqref{posteriors} can be connected as
\begin{equation}
\begin{split}
 \Pr(Y=\bar{v}_l\,|\,\rn{\hat{Y}}&=\bar{s}_l)=\dfrac{\Pr(\rn{\hat{Y}}=\bar{s}_l\,|\,Y=\bar{v}_l)\Pr(Y=\bar{v}_l)}{\Pr(\rn{\hat{Y}}=\bar{s}_l)},\\
 &=\dfrac{\Pr(\rn{\hat{Y}}=\bar{s}_l\,|\,Y=\bar{v}_l)\Pr(Y=\bar{v}_l)}{\Pr(\rn{\hat{Y}}=\bar{s}_l\,|\,Y=0)\Pr(Y=0)+\Pr(\rn{\hat{Y}}=\bar{s}_l\,|\,Y=1)\Pr(Y=1)}.
 \end{split}
\end{equation}
From this, it is easy to see that the most probable values of $R_l^*$ and $R_l$ are simply related. For instance,
\begin{equation}
\begin{split}
 R_{\tm{TP}}^{*} &\sim \dfrac{R_{\tm{TP}}\,p_{\tm{P}}}{R_{\tm{TP}}\,p_{\tm{P}}+R_{\tm{FP}}\,p_{\tm{N}}},\\
 R_{\tm{TN}}^{*} &\sim \dfrac{R_{\tm{TN}}\,p_{\tm{N}}}{R_{\tm{FN}}\,p_{\tm{P}}+R_{\tm{TN}}\,p_{\tm{N}}}.
\end{split}
\end{equation}
This relationships are examples of the $g$ function above, so by \eqref{deltaf},
the uncertainties are related as
\begin{equation}
 \begin{split}
  \delta R_{\tm{TP}}^{*}&\sim\dfrac{p_\tm{N}}{p_\tm{P}}\,R_{\tm{TP}}^{*2}\Biggl[\dfrac{R_{\tm{FP}}}{R_{\tm{TP}}}\,\dfrac{\delta R_{\tm{TP}}}{R_{\tm{TP}}}+\dfrac{\delta R_{\tm{FP}}}{R_{\tm{TP}}}
 +\dfrac{p_\tm{N}}{p_\tm{P}}\,R_{\tm{TP}}^{*}\left(\dfrac{\delta R_{\tm{FP}}}{R_{\tm{TP}}}\right)^2\Biggr],\\
 \delta R_{\tm{TN}}^{*}&\sim\dfrac{p_\tm{P}}{p_\tm{N}}\,R_{\tm{TN}}^{*2}\Biggl[\dfrac{R_{\tm{FN}}}{R_{\tm{TN}}}\,\dfrac{\delta R_{\tm{TN}}}{R_{\tm{TN}}}+\dfrac{\delta R_{\tm{FN}}}{R_{\tm{TN}}}
 +\dfrac{p_\tm{P}}{p_\tm{N}}\,R_{\tm{TN}}^{*}\left(\dfrac{\delta R_{\tm{FN}}}{R_{\tm{TN}}}\right)^2\Biggr].
 \end{split}
\end{equation}
Uncertainties for other metrics derived from $R_l$, e.g the F1 score, can be obtained in a similar manner.

\subsubsection{Effect of calibration}\label{sec:classif_exp}
In the experiments of section \ref{sec:bin_classif}, no hyperparameter optimization is done. The only thing that is varied is the activation of the hidden layer (relu and tanh), and the calibration method for the predictions. Best results are reported.

It is noticed during experimentation that sometimes some of the estimates $\hat{r}_{p}^l$ stay very close to their initial values (within a tolerance of $10^{-7}$), for which NA is used when requesting their optimal values. This is often due to scarcity of data, since $\hat{r}_{p}^l$ may not be updated for each of FN, FP, TN, TP within a mini-batch. For a classifier with relatively few false predictions, for instance, the optimal $\hat{r}_{p}^F$ may not be found. Therefore, uncertainty on the rates of false predictions cannot be currently evaluated for small datasets.

This problem is not found for datasets like the \emph{Adult} dataset ($48,842$ samples). Uncertainties estimated by PIM and propagated according to the technique described above are shown in  Table \ref{PIMcrates}. It is observed that calibrating the predictions leads most of the time to a decrease in the magnitude of the uncertainties, with a stabilization of the corresponding variance. However, for some rates the magnitude of the uncertainties still look too conservative. This is presumably due to the calibration method based on scaling not reaching calibrated-enough predictions. \cite{Kumar} The Scaling-Binary calibrator \cite{Kumar} (whose effects are shown in Figure \ref{adult_scaling}) is not considered in Table \ref{PIMcrates} since it does not give a continuous distribution of predictions, as required by PIM. Further research is desirable, combining the idea behind PIM with a suitable calibration method into one framework.

\setlength{\tabcolsep}{4pt}
\begin{table*}[hbt]
\caption{Classification metrics with uncertainties learned by PIM in the Adult dataset: median $\pm$ median of uncertainty (median absolute deviation of uncertainty) over an ensemble of 20 experiments. These are calculated with and without calibration. Temperature scaling is done using an external library. \cite{temp_scale}}
\label{PIMcrates}
\vskip 0.15in
\begin{center}
\begin{small}
\begin{sc}
\begin{tabular}{ccccc}
\toprule
Calibration & $R_{\tm{FP}}$ & $R_{\tm{TP}}^{*}$ & ACC &\\
\midrule
Uncalibrated &  $0.38\pm0.25\,(0.03)$& $0.89\pm0.09\,(0.01)$ & $0.85\pm0.20\,(0.02)$   \\
Platt scaling&  $0.41\pm0.17\,(0.02)$& $0.88\pm0.07\,(0.01)$ & $0.85\pm0.20\,(0.00)$ &  \\
Temperature scaling & $0.38\pm0.18\,(0.01)$& $0.89\pm0.07\,(0.00)$ & $0.85\pm0.21\,(0.01)$ &  \\
\bottomrule
\end{tabular}
\end{sc}
\end{small}
\end{center}
\vskip -0.1in
\end{table*}

\end{document}